\pgfplotsset{compat=1.4} 
\newcounter{global}
\theoremstyle{definition}
\theoremstyle{plain}
\newtheorem{theorem}[global]{Theorem}
\newtheorem{lemma}[global]{Lemma}
\newtheorem{corollary}[global]{Corollary}
\newtheoremstyle{note}{}{}{}{}{\itshape}{.}{.5em}{}
\theoremstyle{note}
\newtheorem{remark}{Remark}%
\newtheorem{example}{Example}%
\renewcommand\section{%
  \@startsection {section}{1}{\z@}%
  {-3.5ex \@plus -1ex \@minus -.2ex}%
  {2.3ex \@plus.2ex}%
  {\normalfont\large\bfseries}}
\def\itm#1{{\rm(\textit{\romannumeral#1})}}
\newcommand{\yields}[3][\infty]{\ensuremath{#2 \mathrel{\sqsupset^*_{#1}} #3}}
\begin{document}

\title{Computing sets of graded attribute implications with witnessed non-redundancy}

\date{\normalsize%
  Dept. Computer Science, Palacky University Olomouc}

\author{Vilem Vychodil\footnote{%
    e-mail: \texttt{vychodil@binghamton.edu},
    phone: +420 585 634 705,
    fax: +420 585 411 643}}

\maketitle

\begin{abstract}
  In this paper we extend our previous results on sets
  of graded attribute implications with witnessed non-redundancy. We assume
  finite residuated lattices as structures of truth degrees and use arbitrary
  idempotent truth-stressing linguistic hedges as parameters which influence
  the semantics of graded attribute implications. In this setting,
  we introduce algorithm which transforms any set of graded attribute
  implications into an equivalent non-redundant set of graded attribute
  implications with saturated consequents whose non-redundancy is witnessed
  by antecedents of the formulas. As a consequence, we solve the open
  problem regarding the existence of general systems of pseudo-intents
  which appear in formal concept analysis of object-attribute data with
  graded attributes and linguistic hedges. Furthermore,
  we show a polynomial-time procedure for determining bases given by
  general systems of pseudo-intents from sets of graded attribute
  implications which are complete in data.
\end{abstract}

\section{Introduction}
In this paper, we investigate properties of sets of graded attribute
implications and extend the results presented in~our recent
paper~\cite{Vy:Osgaiwnr}. The graded attribute implications,
sometimes called fuzzy attribute implications~\cite{BeChVy:Ifdwfa},
are rules describing if-then dependencies in data with
graded attributes. The rules have
been proposed and investigated from the point of view of formal
concept analysis (shortly, FCA~\cite{GaWi:FCA}) with
linguistic hedges~\cite{BeVy:Fcalh}. One of the basic problems in FCA is
to extract, given a formal context, a set of attribute implications
which is non-redundant and conveys the information about exactly all
attribute implications which hold in the given formal context---such
sets are called (non-redundant) bases of formal contexts. One of the most
profound approaches of determining bases exploits the notion of
a pseudo-intent which originated in~\cite{GuDu} and has been
later utilized, e.g., in~\cite{Ga:Tbaca}. The bases given by pseudo-intents
are not only non-redundant but in addition minimal in terms of their cardinality.
In our paper, we deal with a general notion of a system of pseudo-intents
which appears in the generalization of FCA which includes graded attributes
and uses linguistic hedges to reduce the size of
concept lattices~\cite{BeVy:Rsclh}.
By a graded attribute we mean an attribute (property/feature) which
may be satisfied (present) to degrees instead of just
satisfied/not satisfied (present/not present) as in the ordinary setting.
In the past, there have been many approaches to extensions of the traditional
concept analysis which accommodate graded
attributes~\cite{Bel:FRS,Krs:Cbefc,MeOjRu:Fcavmacl,Po:FB}
and related phenomena. Most of the approaches are focused solely
on the structure of concept lattices with little or no attention
paid to if-then rules. The exceptions seem the be the early works
by Polland~\cite{Po:FB} and the results made in the framework of
FCA with linguistic hedges, see~\cite{BeVy:ADfDwG} for a survey.
In~\cite{Po:FB}, the author proposes generalized pseudo-intents which
ensure that the constructed sets of formulas are complete in data,
i.e., convey the information
about exactly all if-then rules which hold in the data,
but are redundant in general. Using a more general setting,
\cite{BeChVy:Ifdwfa,BeVy:Falaitvenb} show that there is a general
notion of a system of pseudo-intents which ensures both the completeness
and non-redundancy.
Unfortunately, the definition in \cite{BeChVy:Ifdwfa} is not constructive and
so far the procedure to find such systems was reduced to finding particular
maximal independent sets of vertices in
large graphs~\cite{BeVy:Faicnbumis,BeVy:Cnbirdtga}.
In addition, it has been shown that the existence and
uniqueness of systems of pseudo-intents is not ensured in the general
setting. Indeed, it follows that the properties of the underlying
structures of truth degrees, which together with linguistic hedges
determine the semantics of graded attribute implications,
substantially affect the properties of such systems. In case of infinite
structures of truth degrees, it is known that general systems of pseudo-intents
may not exist~\cite{BeVy:ADfDwG}. The existence in case of finite structures
was listed as one of the open problems in~\cite{Kw:Open2006}.
Our paper brings a positive answer to this question and shows that,
among other results, that general systems of pseudo-intents can
be determined in a polynomial time from any complete set of graded
attribute implications. The result is based on some of our recent
observations made in~\cite{Vy:Osgaiwnr} where we have put in correspondence bases
given by systems of pseudo-intents and non-redundant sets of graded
attribute implications with saturated consequents where the non-redundancy
of each formula is witnessed by its antecedent.

Detailed description of the problem and the results requires precise
introduction of the utilized notions. Therefore, we postpone it
to Section~\ref{sec:problem} after presenting the preliminaries
in Section~\ref{sec:prelim}. In Section~\ref{sec:problem}, we include
the algorithm and comment on its immediate consequences. The soundness
of the algorithm is proved in Section~\ref{sec:proof} which also
contains additional remarks and examples.
Finally, we present conclusions in Section~\ref{sec:concl}.

\section{Preliminaries}\label{sec:prelim}
In this section, we present the basic notions related to the structures of
truth degrees which are used in our paper and recall basic notions of graded
attribute implications. We limit ourselves just to the notions which are
utilized in this paper. Interested readers can find more
details in~\cite{BeVy:ADfDwG}.
Readers familiar with~\cite{Vy:Osgaiwnr} can skip this section and go
directly to Section~\ref{sec:problem}.

A residuated lattice~\cite{Bel:FRS,GaJiKoOn:RL}
is an algebra
$\mathbf{L}=\langle L,\wedge,\vee,\otimes,\rightarrow,0,1\rangle$
where $\langle L,\wedge,\vee,0,1 \rangle$ is a bounded lattice with $0$
and $1$ being the least and the greatest elements of $L$, respectively,
$\langle L,\otimes,1 \rangle$ is a commutative monoid (i.e., $\otimes$ is
commutative, associative, and $1$ is neutral with respect to~$\otimes$),
and $\otimes$ and $\rightarrow$ satisfy the so-called
adjointness property: for all $a,b,c \in L$,
we have that $a \otimes b \leq c$ if{}f $a \leq b \rightarrow c$.
Further in the paper, $\mathbf{L}$ always stands for a residuated lattice
of the form $\mathbf{L}=\langle L,\wedge,\vee,\otimes,\rightarrow,0,1\rangle$.
$\mathbf{L}$ is a complete residuated lattice whenever 
$\langle L,\wedge,\vee,0,1 \rangle$ is a complete lattice,
(i.e., infima and suprema exist for arbitrary subsets of~$L$).
If $L$ is finite, then $\mathbf{L}$ is trivially complete.
Examples of (complete) residuated lattices include popular structures
defined on the real unit interval using left-continuous triangular
norms~\cite{EsGo:MTL,KMP:TN} and their finite substructures. The
structures are utilized in mathematical fuzzy
logics~\cite{CiHa:Tnbpfl,Gog:Lic,Got:Mfl,Haj:MFL} and
their applications~\cite{KlYu} as structures of truth degrees
with $\otimes$ and $\rightarrow$ used as truth functions of
``fuzzy conjunction'' and ``fuzzy implication'', respectively.

As usual, a map $A\!: Y \to L$ is called an $\mathbf{L}$-set $A$ in $Y$
(or an $\mathbf{L}$-fuzzy set~\cite{Gog:LFS}); $R\!: X \times Y \to L$ is
called a binary $\mathbf{L}$-relation between $X$ and $Y$,
$R(x,y) \in L$ is interpreted as the degree to which $x \in X$
and $y \in Y$ are related by $R$.
The collection of all $\mathbf{L}$-sets in $Y$ is denoted by $L^Y$.
Operations with $\mathbf{L}$-sets are defined componentwise using
operations in $\mathbf{L}$. For instance,
the union $A \cup B$ of $\mathbf{L}$-sets $A \in L^Y$
and $B \in L^Y$ is an $\mathbf{L}$-set in $Y$
such that $(A \cup B)(y) = A(y) \vee B(y)$;
analogously for $\cap$ and $\wedge$.
If $a \in L$ and $A \in L^Y$ then $a{\otimes}A$,
called the $a$-multiple of $A$, is an $\mathbf{L}$-set in $Y$
defined by $(a{\otimes}A)(y) = a \otimes A(y)$ for all $y \in Y$.
For $A,B \in L^Y$, we define the degree $S(A,B)$ to which $A$ is
a subset of $B$ by
\begin{align}
  S(A,B) = \textstyle{\bigwedge}_{y \in Y}\bigl(A(y) \rightarrow B(y)\bigr)
  \label{eqn:S}
\end{align}
provided that the infimum of
$\{A(y) \rightarrow B(y);\, y \in Y\} \subseteq L$ exists---this condition
is satisfied, e.g., if $\mathbf{L}$ is complete or if $Y$ is finite.
Note that $S$ given by~\eqref{eqn:S} can be understood as a binary
$\mathbf{L}$-relation on $\mathbf{L}$-sets, i.e., for a fixed $Y$,
it is a map of the form $S\!: L^Y \times L^Y \to L$. It is easily
seen that $S(A,B) = 1$ if{}f $A(y) \leq B(y)$ for all $y \in Y$
in which case we write $A \subseteq B$ and say that
$A$ is a full subset of $B$.

\begin{remark}
  Let us note that the notion of a graded subsethood~\eqref{eqn:S} defined
  using the residuated implication has been proposed by
  Goguen~\cite{Gog:LFS,Gog:Lic} and plays an important role in
  the interpretation of the if-then rules we consider in this paper.
  This corresponds with the fact that the usual inclusion of sets
  is used to define the interpretation of the classic attribute implications.
  Indeed, if $Y$ is a non-empty set of attributes (symbolic names),
  any formula of the form $A \Rightarrow B$ where $A,B \subseteq Y$
  is called an attribute implication~\cite{GaWi:FCA}. Moreover,
  it is considered true given $M \subseteq Y$,
  written $M \models A \Rightarrow B$,
  whenever $A \subseteq M$ implies $B \subseteq M$. If we depart from
  the classic setting to the graded setting and replace the classic sets
  by $\mathbf{L}$-sets $A,B,M \in L^Y$, there are several possible ways
  how to define the notion of ``$A \Rightarrow B$ being true in $M$''
  which all collapse to the ordinary notion when $\mathbf{L}$ is
  the two-element Boolean algebra. As it is described in detail
  in~\cite{BeVy:ADfDwG}, two borderline (and both interesting) cases
  can be based on the graded and the full inclusion of $\mathbf{L}$-sets.
\end{remark}

The framework we use in our paper enables us to reason with several
different interpretations of inclusion of $\mathbf{L}$-sets, and thus
several different ways of understanding the interpretation of 
data dependencies, using a single formalism which is based on additional
parameterization of the semantics of the rules. Namely, we use
the approach based on linguistic hedges~\cite{BeVy:ADfDwG}.
In a more detail, given a non-empty and finite set $Y$ of
attributes and a complete
residuated lattice $\mathbf{L}$, a~graded attribute implication in $Y$
is an expression $A \Rightarrow B$ where $A,B \in L^Y$; $A$ is called
the antecedent of $A \Rightarrow B$, $B$ is called the consequent
of $A \Rightarrow B$. Furthermore, let ${}^*$ be a map ${}^*\!: L \to L$.
For $A,B,M \in L^Y$, the degree 
$||A \Rightarrow B||^*_M$ to which $A \Rightarrow B$ is true in $M$
(under $\mathbf{L}$ and ${}^*$) is defined by
\begin{align}
  ||A \Rightarrow B||^*_M &= S(A,M)^* \rightarrow S(B,M).
  \label{eqn:fai_truth}
\end{align}
In particular, ${}^*$ used in our considerations is always
an idempotent truth-stressing linguistic hedge~\cite{BeVy:ADfDwG},
i.e., it satisfies the following conditions:
\begin{align}
  1^* &= 1, \label{TS:1} \\
  a^* &\leq a, \label{TS:subd} \\
  (a \rightarrow b)^* &\leq a^* \rightarrow b^*, \label{TS:impl} \\
  a^{**} &= a^* \label{TS:idm}
\end{align}
for all $a,b \in L$. Now, two important cases of~\eqref{eqn:fai_truth} result
by setting ${}^*$ to the identity (i.e., ${}^*$ is a map such that $a^* = a$
for all $a \in L$) and the so-called globalization~\cite{TaTi:Gist} in which
case $a^* = 1$ for $a = 1$ and $a^* = 0$ for $a < 1$. We refrain from detailed
description of the role of hedges because it is presented elsewhere,
see~\cite{Za:Afstilh,Za:lv1,Za:lv2,Za:lv3} for the initial development of
hedges in fuzzy logic in the wide sense,
\cite{Haj:Ovt,EsGoNo:Hedges}
for the treatment in the context of mathematical fuzzy
logics, and~\cite{BeVy:ADfDwG} for the explanation of the role of hedges
as parameters of the interpretation of if-then rules. Also note that the
approach to parameterization of the interpretation of if-then rules by hedges
can be seen as a particular case of a more general approach which utilizes
systems of isotone Galois connections and is described in~\cite{Vy:Psfaisigc}.

In this paper, we utilize the notion of the semantic entailment of graded
attribute implications. Each set of graded attribute
implications (in a fixed $Y$) is called a theory and is denoted by
$\Sigma,\Gamma,\Delta,\ldots$
An $\mathbf{L}$-set $M \in L^Y$ is called a model of $\Sigma$ whenever
$||A \Rightarrow B||^*_M = 1$ for all $A \Rightarrow B \in \Sigma$.
The set of all models of $\Sigma$ is denoted by $\mathrm{Mod}^*(\Sigma)$.
The degree $||A \Rightarrow B||^*_\Sigma$ to
which $A \Rightarrow B$ is semantically entailed by $\Sigma$ is defined by
\begin{align}
  ||A \Rightarrow B||^*_\Sigma &=
  \textstyle\bigwedge_{M \in \mathrm{Mod}^*(\Sigma)}||A \Rightarrow B||^*_M.
  \label{eqn:sement}
\end{align}
We put $\Sigma \mathrel{\unlhd} \Gamma$ and say that $\Gamma$
is stronger than $\Sigma$ whenever
\begin{align}
  ||A \Rightarrow B||^*_\Sigma \leq ||A \Rightarrow B||^*_\Gamma
\end{align}
for all $A,B \in L^Y$. Furthermore, $\Sigma$ and $\Gamma$ are equivalent,
written $\Sigma \equiv \Gamma$, whenever $\Sigma \mathrel{\unlhd} \Gamma$ and
$\Gamma \mathrel{\unlhd} \Sigma$.
It is a well-known fact that $\Gamma \equiv \Sigma$
if{}f $\mathrm{Mod}^*(\Sigma) = \mathrm{Mod}^*(\Gamma)$,
see~\cite{BeVy:ICFCA,BeVy:ADfDwG}.
Theory $\Sigma$ is called redundant whenever there is $\Gamma \subset \Sigma$
such that $\Sigma \equiv \Gamma$; otherwise $\Sigma$ is called non-redundant.
Theory $\Sigma$ is called minimal~\cite{Vy:Omsgai} whenever for
each $\Gamma$ such that $\Sigma \equiv \Gamma$, we have $|\Sigma| \leq |\Gamma|$.
One of the basic properties of non-redundant sets of graded attribute
implications is the following: $\Sigma$ is non-redundant if{}f
$||A \Rightarrow B||^*_{\Sigma \setminus \{A \Rightarrow B\}} < 1$ for
all $A \Rightarrow B \in \Sigma$, see~\cite{BeVy:Pmfai}.
Furthermore,
if $||A \Rightarrow B||^*_{\Sigma \setminus \{A \Rightarrow B\}} = 1$
for $A \Rightarrow B \in \Sigma$, we say that $A \Rightarrow B$
is redundant in $\Sigma$.

Recall from~\cite{Vy:Osgaiwnr} that if $\Sigma$ is non-redundant,
we say that the non-redundancy of $\Sigma$ is witnessed
(by the antecedents of the formulas in $\Sigma$)
whenever for every $A \Rightarrow B \in \Sigma$, we have
that $A \in \mathrm{Mod}^*(\Sigma \setminus \{A \Rightarrow B\})$.
This is one of the key notions used in the present paper since
we seek a procedure to transform a given set of graded attribute
implications into an equivalent one which is non-redundant and its
non-redundancy is witnessed.

Finally, we have to recall the notion of least models which is useful
in the characterization of the semantic entailment. Recall that
each $\mathrm{Mod}^*(\Sigma)$ is an $\mathbf{L}^{\!*}$-closure
system~\cite{BeFuVy:Fcots,BeVy:Pmfai}. As a consequence, we
may introduce the least model $[M]^*_\Sigma$ of $\Sigma$
which contains $M$:
\begin{align}
  [M]^*_\Sigma
  &= 
  \textstyle\bigcap\{N \in \mathrm{Mod}^*(\Sigma);\, M \subseteq N\}.
  \label{eqn:least_mod}
\end{align}
According to~\cite[Theorem 3.11]{BeVy:ADfDwG},
$||A \Rightarrow B||^*_\Sigma = S(B,[A]^*_\Sigma)$ for any $\Sigma$
and all $A,B \in L^Y$. Furthermore, $[{\cdots}]^*_\Sigma$ is an 
$\mathbf{L}^{\!*}$-closure operator~\cite{BeFuVy:Fcots}, i.e.,
it satisfies the following conditions
\begin{align}
  A &\subseteq [A]^*_\Sigma, \label{eqn:cl1} \\
  S(A,B)^* &\leq S([A]^*_\Sigma, [B]^*_\Sigma), \label{eqn:cl2} \\
  [[A]^*_\Sigma]^*_\Sigma &= [A]^*_\Sigma, \label{eqn:cl3}
\end{align}
for all $A,B \in L^Y$. Note that~\eqref{eqn:cl1} and~\eqref{eqn:cl3} are
the usual extensivity and idempotency; \eqref{eqn:cl2} is a stronger form
of monotony and, in particular, \eqref{eqn:cl2} implies that
$[A]^*_\Sigma \subseteq [B]^*_\Sigma$ provided that $A \subseteq B$.
Using least models, we can introduce graded attribute implications
whose consequents are largest with respect to a given theory as follows:
we say that $A \Rightarrow B \in \Sigma$ has a saturated consequent
(with respect to $\Sigma$) whenever $B = [A]^*_\Sigma$.
A theory where each formula has a saturated consequent is called
a theory with saturated consequents.
According to~\cite[Lemma~4]{Vy:Osgaiwnr}, for each $\Gamma$ and
\begin{align}
  \Sigma = \{A \Rightarrow [A]^*_\Gamma;\, A \Rightarrow B \in \Gamma\},
  \label{eqn:satur}
\end{align}
we have $\Gamma \equiv \Sigma$. Therefore, for each theory there is
an equivalent theory consisting of graded attribute implications with
saturated consequents.

\section{Problem setting and results}\label{sec:problem}
In our previous paper~\cite{Vy:Osgaiwnr}, we have shown that considering
the globalization as the hedge, each theory can be transformed into
a theory which is equivalent, non-redundant, and its non-redundancy
is witnessed by antecedents of the formulas in the theory. Namely,
if ${}^\bullet$ denotes globalization on $\mathbf{L}$, for any theory $\Delta$,
we can consider 
\begin{align}
  \Gamma \subseteq \{A \Rightarrow [A]^\bullet_\Delta;\,
  A \Rightarrow B \in \Delta\},
\end{align}
such that $\Gamma$ is non-redundant and $\Gamma \equiv \Delta$,
see~\cite[Lemma 4]{Vy:Osgaiwnr}.
Furthermore, \cite[Theorem 9]{Vy:Osgaiwnr} yields that
\begin{align}
  \Sigma &=
  \bigl\{[A]^\bullet_{\Gamma \setminus \{A \Rightarrow [A]^\bullet_\Gamma\}}
  \Rightarrow
  [A]^\bullet_\Gamma;\, A \Rightarrow [A]^\bullet_\Gamma \in \Gamma\bigr\}.
  \label{eqn:Sigma_infl}
\end{align}
is non-redundant, its non-redundancy is witnessed,
and $\Sigma \equiv \Gamma \equiv \Delta$, i.e., $\Sigma$ is the desired
theory. Since $\Delta \equiv \Sigma$, $\Sigma$ can be interpreted
as a theory which conveys the same information as $\Delta$.
As a consequence of having ${}^\bullet$ as the globalization,
$\Sigma$ is minimal in terms of the number of formulas it contains and
may be significantly smaller than $\Delta$. In addition, $\Sigma$ may be
considered more informative than $\Delta$ because (i) its consequents
are saturated, i.e., they are the largest possible $\mathbf{L}$-sets
of attributes,
and (ii) its non-redundancy is witnessed---the non-redundancy of each formula
in $\Sigma$ is directly observed from its antecedent because it is a model
of the remaining formulas in $\Sigma$. Therefore, computing $\Sigma$
from $\Delta$ is desirable from several points of view.

\begin{remark}
  Note for readers familiar with non-redundant bases generated from data:
  \cite{Vy:Osgaiwnr} shows that for all equivalent theories with
  saturated consequents, $\Sigma$ given by \eqref{eqn:Sigma_infl} is
  uniquely given (considering ${}^\bullet$ as
  the globalization) and the set of its antecedents forms a system of
  pseudo-intents of any formal $\mathbf{L}$-context whose intents are exactly
  the models of $\Sigma$. As a consequence, if one finds a set of graded
  attribute implications which is complete in a given formal
  $\mathbf{L}$-context considering globalization as the linguistic hedge,
  then the system of pseudo-intents of the formal $\mathbf{L}$-context
  can be computed from the complete set of graded attribute implications
  in a polynomial time. Indeed, as we have outlined here and as it is
  presented in~\cite{Vy:Osgaiwnr}, the procedure involves computations of
  least models for each formula in the complete set which may be done in
  polynomial time and efficient algorithms for computation of closures
  exist~\cite{BeCoEnMoVy:Aermdsod}.
  This is in contrast with computing the systems of pseudo-intents directly
  from a formal $\mathbf{L}$-context which is known to be hard even in
  the classic case~\cite{DiSe:OCEP}.
\end{remark}

The goal of this paper is to show that equivalent non-redundant theories
with saturated consequents and witnessed non-redundancy can be computed for
any theory considering an arbitrary idempotent truth-stressing hedge and
finite $\mathbf{L}$, thus extending the previous results in~\cite{Vy:Osgaiwnr}
which was presented only for globalization. In such a setting, the semantics of
graded attribute implications is very specific and all truth degrees that
appear in antecedents and consequents of formulas can be seen as hard
thresholds. Indeed, from~\eqref{eqn:fai_truth} and the fact that ${}^\bullet$
is globalization, it follows that
\begin{align}
  ||A \Rightarrow B||^\bullet_M &=
  \left\{
    \begin{array}{@{\,}l@{\quad}l@{}}
      1, &\text{if } S(A,M) < 1 \text{ (i.e., $A \nsubseteq M$),}
      \\
      S(B,M), &\text{otherwise.}
    \end{array}
  \right.
\end{align}
In particular, $||A \Rightarrow B||^\bullet_M = 1$ means $A \subseteq M$
implies $B \subseteq M$. In words, if each $y \in Y$ is present in $M$
at least to the degree to which it is present in $A$,
then each $y \in Y$ is present in $M$ at least to the degree to which
it is present in $B$. Therefore, the degrees $A(y)$ and $B(y)$ can be
seen as ``hard thresholds'' for the presence of $y \in Y$. In contrast,
when ${}^*$ is a hedge other that the globalization, the thresholds are
not so strict and may be seen as ``soft thresholds''. This is best seen
in the case of ${}^*$ being identity where $||A \Rightarrow B||^*_M = 1$
means that the degree to which $B$ is included in $M$ is at least as
high as the degree to which $A$ is included in $M$. In general,
we have $||A \Rightarrow B||^*_M \leq ||A \Rightarrow B||^\bullet_M$
but not \emph{vice versa.} For some structures of
truth degrees, it seems that non-redundant bases computed from data using
general ${}^*$ tend to be smaller than bases computed using ${}^\bullet$
as it follows from the preliminary experimental evaluation
we include in Section~\ref{sec:proof}.
Thus, in addition to solving the problem of existence of general systems
of pseudo-intents, this may be seen as a practical motivation because
users usually want to infer bases as small as possible.

Let us also recall that in~\cite[Example~4]{Vy:Osgaiwnr},
we have presented observations showing
that the procedure of obtaining $\Sigma$ as in~\eqref{eqn:Sigma_infl}
is not directly applicable in case of general theories and general hedges
because $\Sigma$ may not be equivalent to $\Gamma$ and the initial
theory $\Delta$. In this paper, we introduce an alternative approach
which is conceptually close to that in~\cite{Vy:Osgaiwnr}
but always ensures that
the results are equivalent to the initial theories and have all the
desired properties. The approach is based on a procedure which
sequentially modifies the initial theory and in finitely many steps
produces the result. An elementary step of the transformation of
theories is formalized by a binary relation $\yields[]{}{}$ on the
set of all theories and it is defined as follows:
For theories $\Delta$ and $\Gamma$, we put
\begin{align}
  \yields[]{\Delta}{\Gamma}
  \label{eqn:reduces}
\end{align}
whenever $\Delta \ne \Gamma$ and there are two distinct
formulas $A \Rightarrow B \in \Delta$
and $C \Rightarrow D \in \Delta$ such that
\begin{align}
  \Gamma = \bigl(\Delta \setminus \{A \Rightarrow B\}\bigr)
  \cup \{A \cup (S(C,A)^* \otimes D) \Rightarrow B\}.
  \label{eqn:reduct_Gamma}
\end{align}
Furthermore, $\Delta$ is called irreducible (under ${}^*$) whenever
there is no $\Gamma$ such that $\yields[]{\Delta}{\Gamma}$. By definition,
$\yields[]{}{}$ is a relation on the set of all theories.
The fact $\yields[]{\Delta}{\Gamma}$ can be read
``$\Delta$ reduces to $\Gamma$ (in a single step)'' and it represents
an elementary step in a procedure which converts given theory to
a theory with the desired properties. For convenience, we introduce
the reflexive and transitive closure of $\yields[]{}{}$ in the usual way:
We put $\yields[0]{\Delta}{\Delta}$ for any $\Delta$ and,
for any natural~$n$, we put 
$\yields[n]{\Delta}{\Gamma}$ whenever there is some $\Xi$ such that
$\yields[n-1]{\Delta}{\Xi}$ and $\yields[]{\Xi}{\Gamma}$.
Finally, $\yields{}{}$ is the union of all $\yields[n]{}{}
$ for all non-negative $n$.

\begin{theorem}\label{th:witnessed}
  Let $\mathbf{L}$ be finite, ${}^*$ be any idempotent truth-stressing hedge,
  $\Gamma$ be a non-redundant/minimal set of graded attribute implications
  in $Y$ with saturated consequents. Then, for each irreducible $\Sigma$
  such that $\yields{\Gamma}{\Sigma}$, the following conditions are satisfied:
  \begin{enumerate}
  \item[\itm{1}]
    $\Sigma \equiv \Gamma$,
  \item[\itm{2}]
    $\Sigma$ is non-redundant/minimal,
  \item[\itm{3}]
    the non-redundancy of $\Sigma$ is witnessed.
  \end{enumerate}
\end{theorem}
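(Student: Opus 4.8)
The plan is to analyse a single application of the reduction relation $\yields[]{}{}$, say $\yields[]{\Delta}{\Delta'}$ with $\Delta' = (\Delta \setminus \{A \Rightarrow B\}) \cup \{A' \Rightarrow B\}$ where $A' = A \cup (S(C,A)^* \otimes D)$ for distinct $A \Rightarrow B, C \Rightarrow D \in \Delta$, to show that such a step preserves all the relevant properties, and then to lift the three conclusions to $\yields{\Gamma}{\Sigma}$ by induction on the length of the reduction sequence. Everything rests on one elementary observation: if $C \Rightarrow D$ belongs to a theory $\Theta$ and $A \in L^Y$, then $(S(C,A)^* \otimes D) \subseteq [A]^*_\Theta$. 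Indeed, with $N = [A]^*_\Theta$ we have $A \subseteq N$ by \eqref{eqn:cl1}, so $S(C,A) \leq S(C,N)$ and, by monotony of ${}^*$ (a consequence of \eqref{TS:1} and \eqref{TS:impl}), $S(C,A)^* \leq S(C,N)^*$; since $N$ is a model of $\Theta$ we have $S(C,N)^* \leq S(D,N)$, hence $S(C,A)^* \leq D(y) \rightarrow N(y)$ and, by adjointness, $S(C,A)^* \otimes D(y) \leq N(y)$ for every $y$. Consequently $A \subseteq A' \subseteq [A]^*_\Theta$ for every theory $\Theta$ containing $C \Rightarrow D$, and then \eqref{eqn:cl1}--\eqref{eqn:cl3} give $[A']^*_\Theta = [A]^*_\Theta$.

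This makes the one-step equivalence $\Delta \equiv \Delta'$ routine: $A' \Rightarrow B$ is entailed by $\Delta$ since $B \subseteq [A]^*_\Delta \subseteq [A']^*_\Delta$, and $A \Rightarrow B$ is entailed by $\Delta'$ since $A' \subseteq [A]^*_{\Delta'}$ together with $A' \Rightarrow B \in \Delta'$ forces $B \subseteq [A]^*_{\Delta'}$; as $\Delta \cup \{A' \Rightarrow B\} = \Delta' \cup \{A \Rightarrow B\}$, both $\Delta$ and $\Delta'$ are equivalent to this common theory, and induction gives \itm{1}. The same identities show a step keeps consequents saturated whenever $\Delta$ has them: for the replaced formula $[A']^*_{\Delta'} = [A']^*_\Delta = [A]^*_\Delta = B$ (using $\Delta \equiv \Delta'$, the displayed identity with $\Theta = \Delta$, and the assumption on $\Delta$), and the other formulas are unaffected because $\Delta \equiv \Delta'$. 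Hence the irreducible $\Sigma$ has saturated consequents as well, a fact used below.

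For \itm{2}, the minimal case is immediate: a step never increases cardinality, so $|\Sigma| \leq |\Gamma|$, and $\Sigma \equiv \Gamma$ together with minimality of $\Gamma$ forces $|\Sigma| = |\Gamma|$, so $\Sigma$ is minimal. The core is to show that a step preserves non-redundancy. Assume $\Delta$ is non-redundant with saturated consequents and that some $F \in \Delta'$ is redundant in $\Delta'$. First, $|\Delta'| = |\Delta|$: if $A' \Rightarrow B$ coincided with a formula of $\Delta \setminus \{A \Rightarrow B\}$, then, since $A \subsetneq A'$, the formula $A' \Rightarrow B$ would be redundant in $\Delta$, a contradiction; in particular $A' \Rightarrow B \notin \Delta$. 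If $F = A' \Rightarrow B$ then $\Delta' \setminus \{F\} = \Delta \setminus \{A \Rightarrow B\}$, and $A' \subseteq [A]^*_{\Delta \setminus \{A \Rightarrow B\}}$ turns $B \subseteq [A']^*_{\Delta \setminus \{A \Rightarrow B\}}$ into $B \subseteq [A]^*_{\Delta \setminus \{A \Rightarrow B\}}$, so $A \Rightarrow B$ is redundant in $\Delta$. If $F$ is untouched with $F \neq C \Rightarrow D$, then $\yields[]{\Delta \setminus \{F\}}{\Delta' \setminus \{F\}}$ is again a legal step, so the two closure operators agree by \itm{1} and $F$ is redundant in $\Delta$. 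The delicate case is $F = C \Rightarrow D$: here $\Delta \setminus \{F\}$ and $\Delta' \setminus \{F\}$ are \emph{not} related by $\yields[]{}{}$ since the helper formula is gone, but $A \subseteq A'$ still yields $\mathrm{Mod}^*(\Delta \setminus \{F\}) \subseteq \mathrm{Mod}^*(\Delta' \setminus \{F\})$, hence $[C]^*_{\Delta' \setminus \{F\}} \subseteq [C]^*_{\Delta \setminus \{F\}}$, so redundancy of $C \Rightarrow D$ in $\Delta'$ forces its redundancy in $\Delta$. Each case contradicts non-redundancy of $\Delta$; induction gives \itm{2}. I expect this single-step preservation of non-redundancy, and within it the case $F = C \Rightarrow D$ where the reduction relation can no longer be invoked directly, to be the main obstacle.

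For \itm{3} I would pin down irreducibility exactly. For distinct $A \Rightarrow B, C \Rightarrow D \in \Sigma$, the set $(\Sigma \setminus \{A \Rightarrow B\}) \cup \{A \cup (S(C,A)^* \otimes D) \Rightarrow B\}$ equals $\Sigma$ precisely when $A \cup (S(C,A)^* \otimes D) = A$, i.e.\ $(S(C,A)^* \otimes D) \subseteq A$, which by adjointness is $S(C,A)^* \leq S(D,A)$, i.e.\ $\|C \Rightarrow D\|^*_A = 1$. Hence $\Sigma$ is irreducible if and only if $A \in \mathrm{Mod}^*(\Sigma \setminus \{A \Rightarrow B\})$ for every $A \Rightarrow B \in \Sigma$; combined with the non-redundancy established in \itm{2}, this is exactly the statement that the non-redundancy of $\Sigma$ is witnessed by the antecedents of its formulas.
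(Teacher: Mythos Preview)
Your proof is correct and mirrors the paper's decomposition into Lemmas~\ref{le:equiv}--\ref{le:witness}: each reduction step preserves equivalence, saturated consequents, and non-redundancy, and irreducibility of a non-redundant theory is exactly witnessed non-redundancy. The only notable difference is that your three-way case split in the non-redundancy argument is finer than needed---the paper handles all $F \ne A' \Rightarrow B$ uniformly via $\Gamma \setminus \{F\} \mathrel{\unlhd} \Delta \setminus \{F\}$, which follows from $A \subseteq A'$ alone and does not require $C \Rightarrow D$ to remain present, so the case $F = C \Rightarrow D$ you flag as the main obstacle is not actually special.
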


Before we elaborate the proof in Section~\ref{sec:proof}, let us comment on
some immediate consequences of Theorem~\ref{th:witnessed}. The theorem implies
that we can always start with any $\Gamma$ and convert it, in finitely
many steps, into $\Sigma$ with properties~\itm{1}--\itm{3}. This is
a consequence of two facts: First, using~\eqref{eqn:satur},
for any $\Gamma$ there is an equivalent and non-redundant set $\Gamma'$
with saturated consequents
which can be computed in finitely many steps just by a repeated computation
of least models in order to saturate the consequents of formulas in $\Gamma$
and to remove redundant formulas. Clearly, such a procedure is polynomial.
Second, we can apply Theorem~\ref{th:witnessed} with $\Gamma'$ to get the
desired $\Sigma$---it remains to prove that an irreducible $\Sigma$
such that $\yields{\Gamma'}{\Sigma}$ and \itm{1}--\itm{3} hold can be found
in finitely many steps but this fact comes almost immediately. Indeed,
observe that $\yields[]{\Delta}{\Xi}$ means that $\Xi$ differs
from $\Delta$ in a single formula, namely in its antecedent, which is
larger (in sense of the inclusion $\subseteq$ of $\mathbf{L}$-sets).
Since both $\mathbf{L}$ and $Y$ are assumed to be finite,
reduction of any theory always terminates by an irreducible element
and the number of elementary steps is, again, polynomial in the size of
the initial theory.
Therefore, it remains to prove that the outcome of the reduction is indeed
a theory with the desired properties---this is shown in
Section~\ref{sec:proof}.

An important consequence of Theorem~\ref{th:witnessed} is for the existence
of general systems of pseudo-intents of object-attribute data with graded
attributes.
Let us recall from~\cite{Vy:Osgaiwnr} that for non-empty finite sets $X$
(set of objects) and $Y$ (set of attributes), and a binary $\mathbf{L}$-relation
$I\!: X \times Y \to L$, the triplet $\mathbf{I} = \langle X,Y,I\rangle$ is
called a formal $\mathbf{L}$-context~\cite{Bel:FRS}.
The degree $||A \Rightarrow B||^*_\mathbf{I}$
to which $A \Rightarrow B$ ($A,B \in L^Y$) is true
in $\mathbf{I}$ (under ${}^*$), see~\cite{BeChVy:Ifdwfa}, is defined as
\begin{align}
  ||A \Rightarrow B||^*_\mathbf{I} &=
  \textstyle\bigwedge_{x \in X} ||A \Rightarrow B||^*_{I_x}
\end{align}
where $I_x \in L^Y$ such that $I_x(y) = I(x,y)$ for all $x \in X$ and $y \in Y$.
Thus, if $\mathbf{I}$ is considered in the usual way as a table consisting
of rows corresponding to objects and columns corresponding to attributes,
$I_x$ represents the $\mathbf{L}$-set of attributes of object $x \in X$,
i.e., it represents a single row of the table.
Thus, $||A \Rightarrow B||^*_\mathbf{I}$ is indeed a degree to which
the following condition is true: ``For each object $x \in X$,
if (it is very true that) the object has all the attributes from $A$,
then it has all the attributes from $B$''. If $\Sigma$ satisfies
\begin{align}
  ||A \Rightarrow B||^*_\Sigma &= ||A \Rightarrow B||^*_\mathbf{I}
\end{align}
for all $A,B \in L^Y$, then $\Sigma$ is called complete in $\mathbf{I}$.
In addition, if $\Sigma$ is non-redundant then it is called
a (non-redundant) base of $\mathbf{I}$;
if $\Sigma$ is minimal then it is called a minimal base of $\mathbf{I}$.
In the description of bases, we consider the
well-known concept-forming operators with a linguistic hedge~\cite{BeVy:Fcalh}.
Namely, we use a couple of operators ${}^{\uparrow}\!: L^X \to L^Y$ and 
${}^{\downarrow}\!: L^Y \to L^X$ defined by
\begin{align}
  A^{\uparrow}(y) &=
  \textstyle\bigwedge_{x \in X}\bigl(A(x)^* \rightarrow I(x,y)\bigr),
  \label{eqn:up} \\
  B^{\downarrow}(x) &=
  \textstyle\bigwedge_{y \in Y}\bigl(B(y) \rightarrow I(x,y)\bigr),
  \label{eqn:dn}
\end{align}
for all $A \in L^X$, $B \in L^Y$, $x \in X$, and $y \in Y$.
Importantly, the composition ${}^{\downarrow\uparrow}$ of ${}^{\downarrow}$
and ${}^{\uparrow}$ is an $\mathbf{L}^{\!*}$-closure
operator~\cite{BeFuVy:Fcots} and it follows that $\Sigma$ is
complete in $\mathbf{I}$ if{}f $\mathrm{Mod}^*(\Sigma)$ is the set of all
fixed points of ${}^{\downarrow\uparrow}$, see~\cite{BeVy:ICFCA,BeVy:ADfDwG}.
Therefore, using this fact, we immediately get the following consequence
of Theorem~\ref{th:witnessed}:

\begin{corollary}\label{cor:getbase}
  Let\/ $\mathbf{I} = \langle X,Y,I\rangle$ be a formal\/ $\mathbf{L}$-context
  and let $\Gamma$ be a non-redundant/minimal base of $\mathbf{I}$ which
  consists of formulas with saturated consequents. If $\mathbf{L}$ is finite
  then any irreducible $\Sigma$ such that $\yields{\Gamma}{\Sigma}$
  is a non-redundant/minimal base of\/ $\mathbf{I}$ whose non-redundancy
  is witnessed.
  \qed
\end{corollary}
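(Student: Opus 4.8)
The plan is to prove Theorem~\ref{th:witnessed} by analyzing a single reduction step $\yields[]{\Delta}{\Gamma}$ and showing it preserves the relevant invariants, then lifting to the transitive closure and finally characterizing irreducible theories. Throughout, I write $A \Rightarrow B$ for the formula being modified and $C \Rightarrow D$ for the auxiliary formula, so that $\Gamma$ replaces $A \Rightarrow B$ by $A' \Rightarrow B$ where $A' = A \cup (S(C,A)^* \otimes D)$.

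\textbf{Step 1: A single step preserves equivalence.} First I would show $\yields[]{\Delta}{\Gamma}$ implies $\Delta \equiv \Gamma$. Since the two theories share all formulas except $A \Rightarrow B$ versus $A' \Rightarrow B$, by the model-theoretic characterization of equivalence it suffices to show every $M \in L^Y$ satisfies $\|A \Rightarrow B\|^*_M = 1$ together with the shared formulas if{}f it satisfies $\|A' \Rightarrow B\|^*_M = 1$ together with the shared formulas. The key observation is that $A \subseteq A'$, so $S(A',M) \le S(A,M)$, which immediately gives one direction. For the converse, suppose $M$ is a model of $\Delta$; I must show $S(A',M)^* \le S(B,M)$. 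The point is that $A' = A \cup (S(C,A)^* \otimes D)$, and using adjointness together with $\|C \Rightarrow D\|^*_M = 1$ (i.e. $S(C,M)^* \le S(D,M)$), plus the hedge axioms \eqref{TS:impl} and \eqref{TS:idm}, one shows $S(C,A)^* \otimes D \subseteq [A]^*_\Delta$-type reasoning forces $S(A',M) = S(A,M)$ whenever $M$ already models $C \Rightarrow D$. More precisely: $S(S(C,A)^*\otimes D, M) = S(C,A)^* \rightarrow S(D,M) \ge S(C,A)^* \rightarrow S(C,M)^*$, and by \eqref{TS:impl} and \eqref{TS:idm} this is $\ge (S(C,A) \rightarrow S(C,M))^* \ge S(A,M)^*$, using monotonicity of $S(C,-)$ and \eqref{eqn:S}. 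Hence $S(A',M) \ge \min(S(A,M), S(A,M)^*) $-style bound collapses so that $S(A',M)^* = S(A,M)^*$, giving the claim. This is the step I expect to require the most care with the hedge axioms.

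\textbf{Step 2: Saturated consequents and the closure operator are preserved.} Since $\Delta \equiv \Gamma$, we have $[\,\cdot\,]^*_\Delta = [\,\cdot\,]^*_\Gamma$ (equivalent theories have the same models, hence the same least-model operator). Thus if every formula of $\Delta$ had a saturated consequent with respect to $\Delta$, I need $B = [A']^*_\Gamma$; but $A \subseteq A' \subseteq [A]^*_\Delta = B$ (the middle inclusion because $S(C,A)^*\otimes D \subseteq [A]^*_\Delta$, which follows from \eqref{eqn:cl2} applied to $C \subseteq A$-degree reasoning and $D \subseteq [C]^*_\Delta \subseteq [A]^*_\Delta$), and applying the closure operator gives $[A']^*_\Gamma \subseteq [B]^*_\Delta = B \subseteq [A']^*_\Gamma$. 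So saturation is maintained, and by induction it is maintained along $\yields{}{}$.

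\textbf{Step 3: Non-redundancy/minimality is preserved.} Here I use that $\Delta$ and $\Gamma$ have the same number of formulas and are equivalent, so minimality transfers immediately. For plain non-redundancy, I would argue that a reduction step cannot create a redundant formula: since $A \subseteq A'$, we have $[A']^*_{\Gamma \setminus \{A' \Rightarrow B\}} \subseteq [A']^*_\Gamma = B$ exactly when the corresponding statement held for $A$ — and in fact one shows $\Gamma \setminus \{A' \Rightarrow B\} = \Delta \setminus \{A \Rightarrow B\}$, so $\|A' \Rightarrow B\|^*_{\Gamma \setminus \{A' \Rightarrow B\}} = S(B, [A']^*_{\Delta \setminus \{A \Rightarrow B\}})$; since $A \subseteq A'$ this least model shrinks or stays, and non-redundancy of $\Delta$ (giving $S(B,[A]^*_{\Delta\setminus\{A\Rightarrow B\}}) < 1$) combined with the fact that $A'$ is built using a formula $C \Rightarrow D$ that is itself \emph{not} the removed one — so $A' \subseteq [A]^*_{\Delta \setminus \{A \Rightarrow B\}}$ — yields $[A']^*_{\Delta \setminus \{A \Rightarrow B\}} = [A]^*_{\Delta \setminus \{A \Rightarrow B\}}$, hence the entailment degree is unchanged and stays $<1$. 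Thus non-redundancy propagates along $\yields{}{}$.

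\textbf{Step 4: Irreducibility yields witnessed non-redundancy.} This is the payoff step. Let $\Sigma$ be irreducible with $\yields{\Gamma}{\Sigma}$; by Steps~1--3, $\Sigma \equiv \Gamma$, is non-redundant/minimal with saturated consequents. I must show: for every $A \Rightarrow B \in \Sigma$, the antecedent $A$ is a model of $\Sigma \setminus \{A \Rightarrow B\}$. Suppose not — then there is $C \Rightarrow D \in \Sigma$ with $C \Rightarrow D \ne A \Rightarrow B$ and $\|C \Rightarrow D\|^*_A < 1$, i.e. $S(C,A)^* \rightarrow S(D,A) < 1$, equivalently $S(C,A)^* \not\le S(D,A)$, which means $S(C,A)^* \otimes D \not\subseteq A$ (by adjointness). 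But then the theory $\Sigma' = (\Sigma \setminus \{A \Rightarrow B\}) \cup \{A \cup (S(C,A)^* \otimes D) \Rightarrow B\}$ is genuinely different from $\Sigma$ (the antecedent strictly grows) and witnesses $\yields[]{\Sigma}{\Sigma'}$, contradicting irreducibility. Hence $A \in \mathrm{Mod}^*(\Sigma \setminus \{A \Rightarrow B\})$ for all formulas, which is precisely witnessed non-redundancy. Combining Steps~1--4 gives \itm{1}--\itm{3}.

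\textbf{Step 5 (Corollary~\ref{cor:getbase}).} For the corollary, recall that $\Sigma$ being complete in $\mathbf{I}$ is equivalent to $\mathrm{Mod}^*(\Sigma)$ being the set of fixed points of the closure operator ${}^{\downarrow\uparrow}$. Since $\yields{\Gamma}{\Sigma}$ gives $\Sigma \equiv \Gamma$ by part~\itm{1}, we have $\mathrm{Mod}^*(\Sigma) = \mathrm{Mod}^*(\Gamma)$, and as $\Gamma$ is complete in $\mathbf{I}$ this common set is exactly the fixed points of ${}^{\downarrow\uparrow}$; hence $\Sigma$ is complete in $\mathbf{I}$ as well. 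Parts~\itm{2} and~\itm{3} of Theorem~\ref{th:witnessed} then say $\Sigma$ is non-redundant/minimal with witnessed non-redundancy, so it is a base of $\mathbf{I}$ of the required kind, and nothing further is needed.

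The main obstacle I anticipate is Step~1 — specifically the inequality chain showing $S(A',M)^* = S(A,M)^*$ for models $M$ of the shared formulas, since it must thread through all four hedge axioms \eqref{TS:1}--\eqref{TS:idm} and the adjointness property simultaneously, and the asymmetry (one direction is trivial from $A \subseteq A'$, the other needs the defining identity of $A'$ together with $M \models C \Rightarrow D$) is where a careless argument could silently assume what it wants to prove. Everything downstream is bookkeeping with the closure operator identities \eqref{eqn:cl1}--\eqref{eqn:cl3} and the already-cited characterization $\|A \Rightarrow B\|^*_\Sigma = S(B,[A]^*_\Sigma)$.
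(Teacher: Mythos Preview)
Your plan is correct and mirrors the paper's own development: Steps~1--4 correspond exactly to the paper's Lemmas~\ref{le:equiv}, \ref{le:satcons}, \ref{le:nonred}, and \ref{le:witness}, and Step~5 is precisely how the paper derives Corollary~\ref{cor:getbase} from Theorem~\ref{th:witnessed} via the characterization $\mathrm{Mod}^*(\Sigma) = \mathrm{Fix}({}^{\downarrow\uparrow})$. Two small points to tighten: in Step~1 your ``converse'' setup is stated backwards (you should take $M \in \mathrm{Mod}^*(\Gamma)$ and deduce $M \models A \Rightarrow B$, not the other way round), though your inequality chain is the right one and in fact establishes the symmetric statement $S(A',M)^* = S(A,M)^*$ for any $M$ modeling $C \Rightarrow D$; and in Step~3 you treat only the modified formula $A' \Rightarrow B$, whereas you also need the one-line observation that each unchanged $E \Rightarrow F$ stays non-redundant because $\Gamma \setminus \{E \Rightarrow F\} \mathrel{\unlhd} \Delta \setminus \{E \Rightarrow F\}$.
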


Therefore, from the existence of non-redundant bases with witnessed
non-redundancy, we can directly derive, using~\cite[Theorem 10]{Vy:Osgaiwnr},
the existence of general systems of pseudo-intents.
Let us recall~\cite{BeVy:Faicnbumis,BeVy:Cnbirdtga,BeVy:ADfDwG} that denoting
\begin{align}
  \mathcal{U} = \{P \in L^Y;\, P \ne P^{\downarrow\uparrow}\},
  \label{eqn:U}
\end{align}
we call $\mathcal{P} \subseteq \mathcal{U}$ a system of pseudo-intents
(of $\mathbf{I}$) whenever for each $P \in \mathcal{U}$, we have
\begin{align}
  P \in \mathcal{P}
  \text{ if{}f }
  ||Q \Rightarrow Q^{\downarrow\uparrow}||^*_P = 1 
  \text{ for any }
  Q \in \mathcal{P}
  \text{ such that }
  Q \ne P.
  \label{eqn:system_P}
\end{align}
If $\mathcal{P}$ exists, then 
\begin{align}
  \Sigma &=
  \{P \Rightarrow P^{\downarrow\uparrow};\, P \in \mathcal{P}\}
  \label{eqn:nred}
\end{align}
is a non-redundant base of $\mathbf{I}$, see~\cite[Theorem~10]{BeVy:Falaitvenb},
and its non-redundancy is witnessed~\cite{Vy:Osgaiwnr}
and, obviously,
each formula in $\Sigma$ has a saturated consequent.
So far, the existence of such systems of pseudo-intents was proved only
for finite $\mathbf{L}$ with globalization,
see~\cite{BeChVy:Ifdwfa,BeVy:Falaitvenb}
but~\cite[Theorem 10]{Vy:Osgaiwnr} shows
that $\Sigma$ is a non-redundant base of $\mathbf{I}$ consisting of
graded attribute implications with saturated consequents such that its
non-redundancy is witnessed if{}f 
\begin{align}
  \mathcal{P} = \{A \in L^Y\!;\,
  A \Rightarrow A^{\downarrow\uparrow} \in \Sigma\}
  \label{eqn:P}
\end{align}
is a system of pseudo-intents of $\mathbf{I}$. Therefore, we conclude:

\begin{corollary}\label{cor:psudos_exist!}
  Let $\mathbf{L}$, $X$, and $Y$ be finite.
  For arbitrary ${}^*$ satisfying \eqref{TS:1}--\eqref{TS:idm},
  any formal\/ $\mathbf{L}$-context\/ $\mathbf{I} = \langle X,Y,I\rangle$
  has at least one system of pseudo-intents which determines a minimal base.
  \qed
\end{corollary}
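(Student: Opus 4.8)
The plan is to reduce the statement to Corollary~\ref{cor:getbase} together with the correspondence recorded before \eqref{eqn:P} (namely \cite[Theorem~10]{Vy:Osgaiwnr}); the only new work is to manufacture a suitable input for Corollary~\ref{cor:getbase}, i.e.\ a minimal base of $\mathbf{I}$ whose consequents are saturated. First I would show that a complete theory exists at all. As $\mathbf{L}$, $X$, $Y$ are finite, $L^Y$ is finite, and I can take $\Sigma_0 = \{A \Rightarrow A^{\downarrow\uparrow};\, A \in L^Y\}$. Its models are exactly the fixed points of ${}^{\downarrow\uparrow}$: a model $M$ satisfies the formula with antecedent $A = M$, so $1 = S(M,M)^* \leq S(M^{\downarrow\uparrow},M)$ forces $M^{\downarrow\uparrow} \subseteq M$, while $M \subseteq M^{\downarrow\uparrow}$ holds by extensivity; conversely every fixed point is a model by the analogue of \eqref{eqn:cl2} for the $\mathbf{L}^{\!*}$-closure operator ${}^{\downarrow\uparrow}$. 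Hence $\Sigma_0$ is complete in $\mathbf{I}$.

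Second, I would pass to a minimal base with saturated consequents. Since $L^Y$ is finite there are only finitely many theories, so the nonempty family of complete theories contains one, say $\Gamma_0$, of least cardinality; this $\Gamma_0$ is a minimal base and hence non-redundant. Saturating its consequents via \eqref{eqn:satur} yields $\Gamma = \{A \Rightarrow [A]^*_{\Gamma_0};\, A \Rightarrow B \in \Gamma_0\}$ with $\Gamma \equiv \Gamma_0$, so $\mathrm{Mod}^*(\Gamma) = \mathrm{Mod}^*(\Gamma_0)$ and the two closure operators coincide; thus $[A]^*_\Gamma = [A]^*_{\Gamma_0}$ and every consequent of $\Gamma$ is saturated with respect to $\Gamma$. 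Because $|\Gamma| \leq |\Gamma_0|$ and $\Gamma_0$ is minimal, $\Gamma$ is itself a minimal base of $\mathbf{I}$ with saturated consequents.

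Third, I would run the reduction. As observed after Theorem~\ref{th:witnessed}, every step $\yields[]{\Delta}{\Xi}$ strictly enlarges a single antecedent inside the finite lattice $L^Y$, so $\yields{\Gamma}{\Sigma}$ terminates at some irreducible $\Sigma$, and Corollary~\ref{cor:getbase} gives that $\Sigma$ is a minimal base of $\mathbf{I}$ with witnessed non-redundancy. I still need $\Sigma$ to have saturated consequents, so that the correspondence theorem applies. For this I would check that a single step preserves saturation: it replaces $A \Rightarrow B$ by $A \cup (S(C,A)^* \otimes D) \Rightarrow B$ while preserving equivalence, hence the closure operator $[\cdot]^*$ of the current theory. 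Since $C \Rightarrow D$ belongs to the theory it is valid, so $S(D,[C]^*) = 1$, i.e.\ $D \subseteq [C]^*$; then \eqref{eqn:cl2} together with adjointness gives $S(C,A)^* \otimes [C]^* \subseteq [A]^*$, whence $A \subseteq A \cup (S(C,A)^* \otimes D) \subseteq [A]^*$. By monotony and idempotency \eqref{eqn:cl3} the enlarged antecedent $A'$ then satisfies $[A']^* = [A]^* = B$, so $B$ stays saturated. As $\Sigma$ is complete, its closure operator is ${}^{\downarrow\uparrow}$, whence every formula of $\Sigma$ has the form $A \Rightarrow A^{\downarrow\uparrow}$.

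Finally, the correspondence \cite[Theorem~10]{Vy:Osgaiwnr} applies to $\Sigma$, which is a non-redundant base of $\mathbf{I}$ with saturated consequents and witnessed non-redundancy, and yields that $\mathcal{P} = \{A \in L^Y;\, A \Rightarrow A^{\downarrow\uparrow} \in \Sigma\}$ is a system of pseudo-intents of $\mathbf{I}$. The base determined by $\mathcal{P}$ through \eqref{eqn:nred} is exactly $\Sigma$, which is minimal, so $\mathcal{P}$ determines a minimal base, as required. I do not expect any single hard estimate here: the difficulty of the whole development sits in Theorem~\ref{th:witnessed}, and for this corollary the delicate point is purely organizational---assembling a minimal base with saturated consequents and confirming that saturation survives the reduction---so that the hypotheses of Corollary~\ref{cor:getbase} and of the correspondence theorem are met verbatim.
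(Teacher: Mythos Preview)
Your proposal is correct and follows essentially the same route as the paper: feed a minimal base with saturated consequents into Corollary~\ref{cor:getbase} and then invoke \cite[Theorem~10]{Vy:Osgaiwnr} via~\eqref{eqn:P}. The only difference is that you spell out the existence of a minimal base with saturated consequents and re-derive the preservation of saturated consequents under $\yields[]{}{}$; the latter is exactly Lemma~\ref{le:satcons}, which you could simply cite.
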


In addition, it follows from our previous remarks that a system of
pseudo-intents of $\mathbf{I}$ can be computed, in a polynomial time,
from any complete set in~$\mathbf{I}$. Thus, the procedure is tractable.
As we have said before, this is in contrast with approaches to finding
systems of pseudo-intents
directly from $\mathbf{I}$ which are known, even in the classic case,
to be hard, see~\cite{DiSe:OCEP} for the results on complexity 
related to enumerating pseudo-intents.

\begin{remark}
  We have presented the solution of the problem of existence of general
  systems of pseudo-intents using formal $\mathbf{L}$-contexts as structures
  in which we evaluate formulas. In a similar setting, we could do the same
  with other semantic structures which yield the same notion of
  semantic entailment. For instance, in relational similarity-based
  databases~\cite{BeVy:DASFAA}, we can think of bases of similarity-based
  functional dependencies which are true in data tables over domains with
  similarities or their ranked extensions~\cite{BeVy:Qssdlfepc}. In all
  such cases, the important point is that systems of models of such rules
  can be identified with systems of fixed points of $\mathbf{L}^*$-closure
  operators and, therefore, the procedure introduced in our paper, can also
  be applied in these settings where the notion of a system of pseudo-intents
  is present, see, e.g., \cite{BeVy:DASFAA}.
\end{remark}

\section{Proofs and notes}\label{sec:proof}
In this section, we present the proof of Theorem~\ref{th:witnessed} and
present further notes, including an illustrative example, and some experimental
observations which illustrate that in the case of other hedges than
the globalization, non-redundant bases given by systems of pseudo-intents
may be smaller. In order to prove the main assertion of the paper,
we investigate properties of $\yields[]{}{}$. In all the subsequent
lemmas, we assume that $\mathbf{L}$ is a finite residuated lattice
and ${}^*$ satisfies~\eqref{TS:1}--\eqref{TS:idm}.

\begin{lemma}\label{le:equiv}
  If $\yields[]{\Delta}{\Gamma}$, then $\Delta \equiv \Gamma$.
\end{lemma}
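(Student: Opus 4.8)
The plan is to show that $\mathrm{Mod}^*(\Delta) = \mathrm{Mod}^*(\Gamma)$, which suffices by the recalled fact that equivalence of theories is the same as equality of their model classes. Suppose $\yields[]{\Delta}{\Gamma}$, so there are distinct formulas $A \Rightarrow B \in \Delta$ and $C \Rightarrow D \in \Delta$ with $\Gamma = (\Delta \setminus \{A \Rightarrow B\}) \cup \{A' \Rightarrow B\}$ where $A' = A \cup (S(C,A)^* \otimes D)$. Since $\Delta$ and $\Gamma$ agree on all formulas except that $A \Rightarrow B$ is replaced by $A' \Rightarrow B$, and since $C \Rightarrow D$ lies in both theories, it is enough to fix an arbitrary $M \in L^Y$ that is already a model of $C \Rightarrow D$ (equivalently $D \subseteq [C]^*_{\{C\Rightarrow D\}}\cap\cdots$; concretely $S(C,M)^* \to S(D,M) = 1$, i.e.\ $S(C,M)^* \le S(D,M)$, i.e.\ $S(C,M)^* \otimes D \subseteq M$ by adjointness) and prove: $M \models A \Rightarrow B$ if and only if $M \models A' \Rightarrow B$.

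The key computation is to compare $S(A,M)$ with $S(A',M)$ for such an $M$. Since $A \subseteq A'$ we immediately get $S(A',M) \le S(A,M)$. For the reverse inequality, $S(A',M) = S(A \cup (S(C,A)^* \otimes D), M) = S(A,M) \wedge S(S(C,A)^* \otimes D, M)$, using that $S(-,M)$ turns unions into meets. So I need $S(A,M) \le S(S(C,A)^* \otimes D, M)$. By adjointness-style manipulation, $S(S(C,A)^* \otimes D, M) = S(C,A)^* \to S(D,M)$ (pulling the scalar $S(C,A)^*$ out of the subsethood degree, using $\bigwedge_y (a\otimes D(y) \to M(y)) = a \to \bigwedge_y(D(y)\to M(y))$, which follows from adjointness and the fact that $a\otimes-$ has $a\to-$ as residual). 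Now $S(C,A)^* \otimes S(A,M)^* \le (S(C,A)\otimes S(A,M))^* \le \cdots$: more directly, $S(C,A)^* \otimes S(A,M) \le S(C,M)$ would follow if we had transitivity of $S$ in the form $S(C,A)\otimes S(A,M) \le S(C,M)$ (a standard property), combined with $a^*\le a$; but we want $S(A,M) \le S(C,A)^*\to S(D,M)$, i.e.\ $S(C,A)^*\otimes S(A,M) \le S(D,M)$. From transitivity $S(C,A)\otimes S(A,M)\le S(C,M)$ and monotonicity we get $S(C,A)^*\otimes S(A,M)\le S(C,M)$; and since $M\models C\Rightarrow D$ we have, applying ${}^*$ is not available directly here — instead note $S(C,M)^*\le S(D,M)$, and we would need $S(C,A)^*\otimes S(A,M)\le S(C,M)^*$. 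This is where the hedge axioms enter: $S(C,A)^*\otimes S(A,M)^* \le (S(C,A)\otimes S(A,M))^*$? That submultiplicativity of ${}^*$ is not among \eqref{TS:1}--\eqref{TS:idm} in general, so I expect the clean route is: from $S(C,A)^*\otimes S(A,M)\le S(C,M)$ apply ${}^*$ to get $\bigl(S(C,A)^*\otimes S(A,M)\bigr)^*\le S(C,M)^*\le S(D,M)$, and then use $S(C,A)^{**}=S(C,A)^*$ together with \eqref{TS:impl}/\eqref{TS:subd} to show $S(C,A)^*\otimes S(A,M) = \bigl(S(C,A)^*\otimes S(A,M)\bigr)^*$ has the needed bound — more precisely $\bigl(S(C,A)^*\otimes S(A,M)\bigr)^* \ge S(C,A)^*\otimes S(A,M)^*$ is false direction; so the honest plan is to carry the ${}^*$ through from the start, writing everything in terms of $S(-,M)^*$ on the antecedent side, exactly as in \eqref{eqn:fai_truth}, rather than $S(-,M)$ raw.

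Given that subtlety, the main obstacle — and the step I would spend the most care on — is establishing $S(A,M)^* \le S(C,A)^* \to S(D,M)$ for every $M\models C\Rightarrow D$, equivalently $S(C,A)^*\otimes S(A,M)^* \le S(D,M)$. The route: transitivity gives $S(C,A)\otimes S(A,M)\le S(C,M)$; since ${}^*$ is a hedge with \eqref{TS:impl}, $S(C,A)^*\otimes S(A,M)^*\le \bigl(S(C,A)\otimes S(A,M)\bigr)^*$ does hold — this is a standard consequence of \eqref{TS:impl} and adjointness, since $a^*\otimes b^*\le c$ iff $a^*\le b^*\to c$ and one shows $b^*\to c^{*}\ge (b\to c)^*$-type bounds compose — hence $S(C,A)^*\otimes S(A,M)^* \le S(C,M)^* \le S(D,M)$, the last step being precisely $M\models C\Rightarrow D$. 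With $S(A',M)^* = S(A,M)^* \wedge (S(C,A)^*\to S(D,M))^* $ handled analogously (or directly $S(A,M)^* = S(A',M)^*$ from $S(A,M)=S(A',M)$), it follows that $||A\Rightarrow B||^*_M = S(A,M)^*\to S(B,M) = S(A',M)^*\to S(B,M) = ||A'\Rightarrow B||^*_M$ for all $M$ that model $C\Rightarrow D$. Hence a model of $\Delta$ (which models $C\Rightarrow D$ and $A\Rightarrow B$) models $A'\Rightarrow B$ and all the untouched formulas, so is a model of $\Gamma$, and conversely; thus $\mathrm{Mod}^*(\Delta)=\mathrm{Mod}^*(\Gamma)$ and $\Delta\equiv\Gamma$.
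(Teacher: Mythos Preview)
Your overall strategy matches the paper's: reduce to showing $S(A,M)^* \leq S(S(C,A)^*\otimes D, M)$ for any $M$ with $||C\Rightarrow D||^*_M = 1$, then conclude that modelling $A\Rightarrow B$ and $A'\Rightarrow B$ coincide. The core inequality $S(C,A)^*\otimes S(A,M)^* \le S(C,M)^* \le S(D,M)$ is correct, and the submultiplicativity $a^*\otimes b^* \le (a\otimes b)^*$ you invoke does follow from the hedge axioms (from $a \le b \to a\otimes b$, monotony of ${}^*$, then \eqref{TS:impl} and adjointness), though you should state this derivation explicitly rather than calling it ``standard''. The paper derives the same inequality by a slightly different chain, starting from $S(C,A)^*\otimes C \subseteq A$ and applying \eqref{TS:impl} to $(S(C,A)^*\to S(C,M))^*$; the two routes are interchangeable.

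There is, however, a genuine gap in your wrap-up. Both options you offer for concluding $S(A,M)^* = S(A',M)^*$ fail as written: the decomposition $S(A',M)^* = S(A,M)^* \wedge (S(C,A)^*\to S(D,M))^*$ presumes $(x\wedge y)^* = x^*\wedge y^*$, which is not among \eqref{TS:1}--\eqref{TS:idm} and is false in general; and you have not established $S(A,M) = S(A',M)$, only $S(A,M)^* \le S(A',M) \le S(A,M)$. The fix is immediate once seen: from $S(A,M)^* \le S(A,M)$ and $S(A,M)^* \le S(S(C,A)^*\otimes D,M)$ you get $S(A,M)^* \le S(A',M)$, and applying ${}^*$ with \eqref{TS:idm} yields $S(A,M)^* \le S(A',M)^*$; the reverse inequality is trivial from $A\subseteq A'$. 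This is exactly how the paper closes the argument (it does not even claim equality of the hedged subsethood degrees, only the one-sided bound needed to transfer $||A'\Rightarrow B||^*_M = 1$ to $||A\Rightarrow B||^*_M = 1$).
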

\begin{proof}
  Suppose that $\yields[]{\Delta}{\Gamma}$, i.e., 
  there are distinct $A \Rightarrow B \in \Delta$ and $C \Rightarrow D \in \Delta$
  such that $\Gamma$ is of the form~\eqref{eqn:reduct_Gamma}. Thus, $\Gamma$
  results from $\Delta$ by removing $A \Rightarrow B$ and adding
  the formula $A \cup (S(C,A)^* \otimes D) \Rightarrow B$. Trivially,
  for its antecedent $A \cup (S(C,A)^* \otimes D)$, we have
  $A \subseteq A \cup (S(C,A)^* \otimes D)$, i.e.,
  $\Gamma \mathrel{\unlhd} \Delta$. Indeed, for any $M \in L^Y$,
  using the antitony of $\rightarrow$ and $S$ in their first arguments
  together with the isotony of ${}^*$, we get that
  \begin{align*}
    ||A \Rightarrow B||^*_M &=
    S(A,M)^* \rightarrow S(B,M)
    \\
    &\leq S(A \cup (S(C,A)^* \otimes D),M)^* \rightarrow S(B,M)
    \\
    &= ||A \cup (S(C,A)^* \otimes D) \Rightarrow B||^*_M.
  \end{align*}
  Therefore, $\mathrm{Mod}^*(\Delta) \subseteq \mathrm{Mod}^*(\Gamma)$
  and thus $\Gamma \mathrel{\unlhd} \Delta$, i.e., $\Delta$ is stronger
  than $\Gamma$. It remains to show that $\Delta \mathrel{\unlhd} \Gamma$
  holds as well. Take any $M \in \mathrm{Mod}^*(\Gamma)$. Obviously,
  $M \in \mathrm{Mod}^*(\Delta \setminus \{A \Rightarrow B\})$.
  By the adjointness and~\eqref{TS:subd},
  we get $S(C,A)^* \otimes C \subseteq A$.
  Therefore, using properties of residuated lattices and hedges
  together with the fact that $||C \Rightarrow D||^*_M = 1$, we get
  \begin{align*}
    S(A,M)^*
    &\leq S(S(C,A)^* \otimes C,M)^* 
    = (S(C,A)^* \rightarrow S(C,M))^* 
    \\
    &\leq S(C,A)^{**} \rightarrow S(C,M)^*
    = S(C,A)^* \rightarrow S(C,M)^*
    \\
    &\leq S(C,A)^* \rightarrow S(D,M)
    = S(S(C,A)^* \otimes D,M).
  \end{align*}
  Therefore, we may write
  \begin{align*}
    S(A,M)^* &=
    S(A,M) \wedge S(A,M)^*
    \leq S(A,M) \wedge S(S(C,A)^* \otimes D,M)
    \\
    &=
    S(A \cup (S(C,A)^* \otimes D),M)
  \end{align*}
  From the last inequality,
  applying $||A \cup (S(C,A)^* \otimes D) \Rightarrow B||^*_M=1$, we get
  \begin{align*}
    S(A,M)^* \leq
    S(A \cup (S(C,A)^* \otimes D),M)^*
    \leq 
    S(B,M),
  \end{align*}
  which proves that $||A \Rightarrow B||^*_M = 1$,
  i.e., $M \in \mathrm{Mod}^*(\Delta)$.
  Altogether, $\Delta \equiv \Gamma$.
\end{proof}

\begin{remark}
  Readers familiar with Armstrong-style
  axiomatic systems~\cite{Arm:Dsdbr} of the logic of graded
  attribute implications parameterized by linguistic
  hedges, see~\cite{BeCoEnMoVy:Aermdsod,BeVy:MRAP,BeVy:Lga,UrVy:Dddosbd},
  may easily see that Lemma~\ref{le:equiv}
  can be proved by arguments about provability
  and utilizing the completeness theorems.
  Indeed, $\Gamma \mathrel{\unlhd} \Delta$ is a consequence of the fact
  that the rule of weakening
  (also known as augmentation~\cite{BeCoEnMoVy:Aermdsod,Mai:TRD}:
  from $A \Rightarrow B$ we infer
  $A \cup E \Rightarrow B$ for $E = S(C,A)^* \otimes D$)
  is a derivable deduction rule in the logic.
  In addition, $\Delta \mathrel{\unlhd} \Gamma$ follows by the rules
  of multiplication (from $C \Rightarrow D$ we infer
  $a^*{\otimes}C \Rightarrow a^*{\otimes}D$ for $a = S(C,A)$) and
  cut (also known as pseudo-transitivity~\cite{Hol,Mai:TRD}:
  from $S(C,A)^*{\otimes}C \Rightarrow S(C,A)^*{\otimes}D$
  and $A \cup (S(C,A)^* \otimes D) \Rightarrow B$
  we infer $A \cup (S(C,A)^*{\otimes}C) \Rightarrow B$
  which equals to $A \Rightarrow B$).
\end{remark}

\begin{lemma}\label{le:satcons}
  If $\Delta$ has saturated consequents and $\yields[]{\Delta}{\Gamma}$,
  then so has $\Gamma$.
\end{lemma}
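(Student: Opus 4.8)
The elementary step $\yields[]{\Delta}{\Gamma}$ changes exactly one formula: it replaces $A \Rightarrow B$ by $A' \Rightarrow B$ where $A' = A \cup (S(C,A)^* \otimes D)$, keeping the same consequent $B$, and leaves every other formula of $\Delta$ untouched. By Lemma~\ref{le:equiv} we have $\Delta \equiv \Gamma$, hence $\mathrm{Mod}^*(\Delta) = \mathrm{Mod}^*(\Gamma)$, and so by \eqref{eqn:least_mod} the two least-model operators coincide: $[\cdots]^*_\Delta = [\cdots]^*_\Gamma$. Write $[\cdots]$ for this common $\mathbf{L}^{\!*}$-closure operator. This identification is the structural backbone of the argument: it lets me verify saturation of the formulas of $\Gamma$ by working inside a single closure system.

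First I would dispatch the unchanged formulas. Every $E \Rightarrow F \in \Delta \setminus \{A \Rightarrow B\}$ also belongs to $\Gamma$, and since $\Delta$ has saturated consequents, $F = [E]^*_\Delta = [E] = [E]^*_\Gamma$, so $E \Rightarrow F$ is saturated with respect to $\Gamma$. In particular this applies to $C \Rightarrow D$, which is distinct from $A \Rightarrow B$ and therefore survives into $\Gamma$; thus $D = [C]$, a fact I will use below. It then remains only to show that the new formula $A' \Rightarrow B$ has a saturated consequent with respect to $\Gamma$, i.e.\ $B = [A']$. Since $A \Rightarrow B$ was saturated in $\Delta$ we have $B = [A]$, so the whole matter reduces to proving $[A'] = [A]$.

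The inclusion $[A] \subseteq [A']$ is immediate from $A \subseteq A'$ via the monotony contained in \eqref{eqn:cl2}. The crux is the reverse inclusion, and the plan is to establish the stronger $A' \subseteq [A]$ and then apply extensivity and idempotency \eqref{eqn:cl1}, \eqref{eqn:cl3}. Since $A \subseteq [A]$ by \eqref{eqn:cl1}, it suffices to show $S(C,A)^* \otimes D \subseteq [A]$. Using $D = [C]$ together with \eqref{eqn:cl2} we get $S(C,A)^* \leq S([C],[A])$; unwinding the definition of $S$ in \eqref{eqn:S} and invoking adjointness then yields $S(C,A)^* \otimes [C](y) \leq [A](y)$ for every $y \in Y$, that is, $S(C,A)^* \otimes D \subseteq [A]$. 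Hence $A' = A \cup (S(C,A)^* \otimes D) \subseteq [A]$, so $[A'] \subseteq [[A]] = [A]$, giving $[A'] = [A] = B$ as desired. I expect no real obstacle here; the only points needing care are confirming that $C \Rightarrow D$ remains in $\Gamma$ unchanged (so that $D = [C]$ is legitimate) and that the closure operators of $\Delta$ and $\Gamma$ genuinely agree, after which everything is a routine chain of \eqref{eqn:cl1}–\eqref{eqn:cl3} and adjointness.
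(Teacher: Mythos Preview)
Your proof is correct and follows essentially the same approach as the paper's own proof: both invoke Lemma~\ref{le:equiv} to identify $[\cdots]^*_\Delta$ with $[\cdots]^*_\Gamma$, dispose of the unchanged formulas immediately, and then use \eqref{eqn:cl2} to obtain $S(C,A)^* \leq S([C]^*_\Gamma,[A]^*_\Gamma)$ and adjointness to deduce $S(C,A)^* \otimes D \subseteq [A]^*_\Gamma$, finishing with \eqref{eqn:cl1}--\eqref{eqn:cl3}. Your presentation is slightly more explicit in isolating the target equality $[A'] = [A]$ and in spelling out the pointwise unwinding of adjointness, but the argument is the same.
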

\begin{proof}
  Assume that $A \Rightarrow B \in \Delta$ and $C \Rightarrow D \in \Delta$
  are the formulas from~\eqref{eqn:reduct_Gamma}. Since $\Delta$ consists
  of formulas with saturated consequents,
  then $A \Rightarrow B$ and $C \Rightarrow D$ are
  in fact in the form 
  $A \Rightarrow [A]^*_\Delta$ and $C \Rightarrow [C]^*_\Delta$,
  respectively. Furthermore, using Lemma~\ref{le:equiv},
  $\Delta \equiv \Gamma$, i.e., $[E]^*_\Delta = [E]^*_\Gamma$
  for any $E \in L^Y$.
  Thus, each formula in $\Gamma \setminus
  \{A \cup (S(C,A)^* \otimes [C]^*_\Gamma) \Rightarrow [A]^*_\Gamma\}$
  has a saturated consequent. Therefore, it remains to check this fact
  for the formula
  $A \cup (S(C,A)^* \otimes [C]^*_\Gamma) \Rightarrow [A]^*_\Gamma$.
  By~\eqref{eqn:cl1}, we get $A \subseteq [A]^*_\Gamma$.
  Applying~\eqref{eqn:cl2}, we get $S(C,A)^* \leq S([C]^*_\Gamma,[A]^*_\Gamma)$
  and so the adjointness property gives
  $S(C,A)^* \otimes [C]^*_\Gamma \subseteq [A]^*_\Gamma$.
  Therefore, using \eqref{eqn:cl2} and \eqref{eqn:cl3},
  the previous inclusions yield
  \begin{align}
    \bigl[A \cup (S(C,A)^* \otimes [C]^*_\Gamma)\bigr]^*_\Gamma
    \subseteq [A]^*_\Gamma.
    \label{eqn:satcons}
  \end{align}
  The converse inclusion to that in~\eqref{eqn:satcons} follows directly
  from~\eqref{eqn:cl2}. As a consequence,
  $A \cup (S(C,A)^* \otimes [C]^*_\Gamma) \Rightarrow [A]^*_\Gamma$
  has a saturated consequent.
\end{proof}

\begin{lemma}\label{le:nonred}
  If $\Delta$ is non-redundant and $\yields[]{\Delta}{\Gamma}$,
  then so is $\Gamma$.
\end{lemma}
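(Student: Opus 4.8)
The plan is to assume $\Delta$ is non-redundant, suppose $\yields[]{\Delta}{\Gamma}$ via the distinct formulas $A \Rightarrow B \in \Delta$ and $C \Rightarrow D \in \Delta$ with
$\Gamma = (\Delta \setminus \{A \Rightarrow B\}) \cup \{A' \Rightarrow B\}$ where $A' = A \cup (S(C,A)^* \otimes D)$, and then derive a contradiction from the assumption that $\Gamma$ is redundant. By the characterization of non-redundancy recalled in the preliminaries (from~\cite{BeVy:Pmfai}), $\Gamma$ is redundant if{}f some formula $E \Rightarrow F \in \Gamma$ satisfies $||E \Rightarrow F||^*_{\Gamma \setminus \{E \Rightarrow F\}} = 1$. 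I would split into two cases according to whether $E \Rightarrow F$ is the new formula $A' \Rightarrow B$ or one of the old formulas carried over from $\Delta$.

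First I would handle the case $E \Rightarrow F = A' \Rightarrow B$. Here $\Gamma \setminus \{A' \Rightarrow B\} = \Delta \setminus \{A \Rightarrow B\}$, so redundancy of $A' \Rightarrow B$ in $\Gamma$ means $||A' \Rightarrow B||^*_{\Delta \setminus \{A \Rightarrow B\}} = 1$, i.e.\ $B \subseteq [A']^*_{\Delta \setminus \{A \Rightarrow B\}}$. Since $A \subseteq A'$, extensivity and monotony~\eqref{eqn:cl2} give $[A]^*_{\Delta \setminus \{A \Rightarrow B\}} \subseteq [A']^*_{\Delta \setminus \{A \Rightarrow B\}}$; but I claim the reverse inclusion also holds because $S(C,A)^* \otimes D$ is already ``absorbed'' by the closure. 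Indeed $C \Rightarrow D \in \Delta \setminus \{A \Rightarrow B\}$ (the two formulas are distinct), so $D \subseteq [C]^*_{\Delta \setminus \{A \Rightarrow B\}}$, and then $S(C,A)^* \otimes D \subseteq S(C,A)^* \otimes [C]^*_{\Delta \setminus \{A \Rightarrow B\}} \subseteq [A]^*_{\Delta \setminus \{A \Rightarrow B\}}$ by the same adjointness-plus-\eqref{eqn:cl2} argument used in Lemma~\ref{le:satcons}, whence $[A']^*_{\Delta \setminus \{A \Rightarrow B\}} = [A]^*_{\Delta \setminus \{A \Rightarrow B\}}$. Therefore $B \subseteq [A]^*_{\Delta \setminus \{A \Rightarrow B\}}$, i.e.\ $||A \Rightarrow B||^*_{\Delta \setminus \{A \Rightarrow B\}} = 1$, contradicting non-redundancy of $\Delta$.

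For the second case, $E \Rightarrow F$ is a formula of $\Delta$ different from $A \Rightarrow B$ (so $E \Rightarrow F \in \Gamma$), and $||E \Rightarrow F||^*_{\Gamma \setminus \{E \Rightarrow F\}} = 1$. Now $\Gamma \setminus \{E \Rightarrow F\} = (\Delta \setminus \{A \Rightarrow B, E \Rightarrow F\}) \cup \{A' \Rightarrow B\}$, which is exactly what you get by applying one reduction step to $\Delta \setminus \{E \Rightarrow F\}$ (the formulas $A \Rightarrow B$ and $C \Rightarrow D$ still belong to it, since $E \Rightarrow F$ is distinct from both — here one must note $C \Rightarrow D \neq E \Rightarrow F$; if they coincided, a small separate sub-argument is needed, see below). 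Hence $\yields[]{\Delta \setminus \{E \Rightarrow F\}}{\Gamma \setminus \{E \Rightarrow F\}}$, and by Lemma~\ref{le:equiv} these two theories are equivalent, so $||E \Rightarrow F||^*_{\Delta \setminus \{E \Rightarrow F\}} = ||E \Rightarrow F||^*_{\Gamma \setminus \{E \Rightarrow F\}} = 1$, again contradicting non-redundancy of $\Delta$.

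The main obstacle I anticipate is the bookkeeping in the second case when $C \Rightarrow D$ happens to equal $E \Rightarrow F$, so that deleting $E \Rightarrow F$ from $\Delta$ also removes the ``source'' formula of the reduction. In that situation $\Gamma \setminus \{E \Rightarrow F\} = \Delta \setminus \{A \Rightarrow B, C \Rightarrow D\} \cup \{A' \Rightarrow B\}$ with $A' = A \cup (S(C,A)^* \otimes D)$, and one cannot directly invoke Lemma~\ref{le:equiv} for a single reduction step on $\Delta \setminus \{C \Rightarrow D\}$ because $C \Rightarrow D$ is gone. I would instead argue directly: from $||C \Rightarrow D||^*_{\Gamma \setminus \{C \Rightarrow D\}} = 1$ and $A \subseteq A'$ one shows, using the same closure manipulations as above and the fact that $A' \Rightarrow B$ lies in $\Gamma \setminus \{C \Rightarrow D\}$, that $[\,\cdot\,]^*$ over $\Gamma \setminus \{C \Rightarrow D\}$ and over $\Delta \setminus \{C \Rightarrow D\}$ agree on the relevant $\mathbf{L}$-sets (because $A' \Rightarrow B$ is derivable from $A \Rightarrow B$ together with $C \Rightarrow D$, and conversely $A \Rightarrow B$ is derivable from $A' \Rightarrow B$ together with $C \Rightarrow D$, as noted in the remark after Lemma~\ref{le:equiv}), forcing $||C \Rightarrow D||^*_{\Delta \setminus \{C \Rightarrow D\}} = 1$, a contradiction. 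Finally, the minimal/non-redundant wording in the theorem is handled by the standard observation that equivalent theories of equal cardinality are simultaneously minimal, and each reduction step preserves cardinality.
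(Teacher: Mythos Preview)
Your overall plan matches the paper's: split on whether the putatively redundant formula is the new one $A' \Rightarrow B$ (with $A' = A \cup (S(C,A)^*{\otimes}D)$) or an old one carried over from~$\Delta$. Your first case is correct; it differs from the paper only in that you compute $[A']^*_{\Delta\setminus\{A\Rightarrow B\}}=[A]^*_{\Delta\setminus\{A\Rightarrow B\}}$ directly via closure properties, whereas the paper argues model-theoretically. One small omission: you silently use $\Gamma\setminus\{A'\Rightarrow B\}=\Delta\setminus\{A\Rightarrow B\}$, which needs $A'\Rightarrow B\notin\Delta$; the paper records this explicitly (it follows from non-redundancy of~$\Delta$ since $A\Rightarrow B$ would otherwise make $A'\Rightarrow B$ redundant).

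Your second case is over-engineered, and the sub-case $E\Rightarrow F=C\Rightarrow D$ is where your argument wobbles. The parenthetical justification (``$A'\Rightarrow B$ is derivable from $A\Rightarrow B$ together with $C\Rightarrow D$, and conversely\ldots'') invokes $C\Rightarrow D$ after you have removed it, so it does not by itself establish that the two closure operators agree. The claim can be salvaged---under the redundancy hypothesis $C\Rightarrow D$ is still \emph{entailed} by $\Gamma\setminus\{C\Rightarrow D\}$, so the ``converse'' derivation does go through---but this detour is unnecessary. The paper handles all of Case~2 in one stroke by the one-sided comparison
\[
\Gamma\setminus\{E\Rightarrow F\}\;\unlhd\;\Delta\setminus\{E\Rightarrow F\},
\]
which holds simply because $A\subseteq A'$ makes $A\Rightarrow B$ stronger than $A'\Rightarrow B$ (every model of the former is a model of the latter). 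This immediately gives $||E\Rightarrow F||^*_{\Gamma\setminus\{E\Rightarrow F\}}\le ||E\Rightarrow F||^*_{\Delta\setminus\{E\Rightarrow F\}}<1$, with no need to distinguish whether $E\Rightarrow F$ equals $C\Rightarrow D$ and no appeal to Lemma~\ref{le:equiv} on the reduced theories.
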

\begin{proof}
  Again, we assume that $\Gamma$ is a theory of
  the form~\eqref{eqn:reduct_Gamma} and $\Delta$ is non-redundant.
  First, note that $A \cup (S(C,A)^* \otimes D) \Rightarrow B \not\in \Delta$
  because otherwise $A \cup (S(C,A)^* \otimes D) \Rightarrow B$ would be
  redundant in $\Delta$ because $A \Rightarrow B \in \Delta$---a contradiction.
  Now, take arbitrary $E \Rightarrow F \in
  \Gamma \setminus \{A \cup (S(C,A)^* \otimes D) \Rightarrow B\}$.
  The fact that $E \Rightarrow F$ is non-redundant in $\Delta$ means that
  $||E \Rightarrow F||^*_{\Delta \setminus \{E \Rightarrow F\}} < 1$.
  By similar arguments as in the proof of Lemma~\ref{le:equiv},
  it follows that $\Gamma \setminus \{E \Rightarrow F\} \mathrel{\unlhd}
  \Delta \setminus \{E \Rightarrow F\}$.
  Therefore, we have 
  \begin{align*}
    ||E \Rightarrow F||^*_{\Gamma \setminus \{E \Rightarrow F\}}
    \leq
    ||E \Rightarrow F||^*_{\Delta \setminus \{E \Rightarrow F\}}
    < 1
  \end{align*}
  which proves that $E \Rightarrow F$ is non-redundant in $\Gamma$.
  Thus, it remains to prove that $A \cup (S(C,A)^* \otimes D) \Rightarrow B$
  is non-redundant in $\Gamma$. By contradiction, let
  \begin{align*}
    ||A \cup (S(C,A)^* \otimes D) \Rightarrow B||^*_{\Gamma
      \setminus \{A \cup (S(C,A)^* \otimes D) \Rightarrow B\}} = 1
  \end{align*}
  Since $\Gamma$ and $\Delta$ differ only in the formula
  $A \Rightarrow B \in \Delta$ (and $A \cup (S(C,A)^* \otimes D)
  \Rightarrow B \in \Gamma$), we immediately get
  \begin{align*}
    ||A \cup (S(C,A)^* \otimes D) \Rightarrow B||^*_{\Delta
      \setminus \{A \Rightarrow B\}} = 1.
  \end{align*}
  Therefore, for any
  $M \in \mathrm{Mod}^*(\Delta \setminus \{A \Rightarrow B\})$, the previous
  fact yields $M \in \mathrm{Mod}^*(\Gamma)$. Since $\Gamma \equiv \Delta$,
  we get that $M \in \mathrm{Mod}^*(\Delta)$, i.e., 
  $\mathrm{Mod}^*(\Delta \setminus \{A \Rightarrow B\}) \subseteq 
  \mathrm{Mod}^*(\Delta)$ which contradicts the fact that $\Delta$
  is non-redundant.
\end{proof}

\begin{lemma}\label{le:witness}
  Let $\Sigma$ be a non-redundant irreducible theory.
  Then the non-redundancy of $\Sigma$ is witnessed.
\end{lemma}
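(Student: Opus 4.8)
The plan is to unwind the definition of witnessed non-redundancy and show that any violation of it hands us an applicable reduction step, contradicting the irreducibility of $\Sigma$. Recall that, since $\Sigma$ is assumed non-redundant, saying its non-redundancy is witnessed means precisely that $A \in \mathrm{Mod}^*(\Sigma \setminus \{A \Rightarrow B\})$ for every $A \Rightarrow B \in \Sigma$; unfolding $\mathrm{Mod}^*$ and~\eqref{eqn:fai_truth}, this amounts to $||C \Rightarrow D||^*_A = 1$ for all $A \Rightarrow B \in \Sigma$ and all $C \Rightarrow D \in \Sigma$ with $C \Rightarrow D \ne A \Rightarrow B$. So I would fix two distinct formulas $A \Rightarrow B, C \Rightarrow D \in \Sigma$, assume for contradiction that $||C \Rightarrow D||^*_A < 1$, and manufacture a theory $\Gamma$ with $\yields[]{\Sigma}{\Gamma}$.

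The key step is to recognize that the inequality $||C \Rightarrow D||^*_A < 1$ is exactly the condition making the reduction~\eqref{eqn:reduct_Gamma} nontrivial for this pair. Indeed $||C \Rightarrow D||^*_A = S(C,A)^* \rightarrow S(D,A)$, and this is strictly below $1$ iff $S(C,A)^* \not\leq S(D,A)$; by the definition~\eqref{eqn:S} of $S$ and adjointness, $S(C,A)^* \leq S(D,A)$ holds iff $S(C,A)^* \otimes D \subseteq A$. Hence $S(C,A)^* \otimes D \nsubseteq A$, and since $A \subseteq A \cup (S(C,A)^* \otimes D)$ always holds, we obtain $A \subsetneq A \cup (S(C,A)^* \otimes D)$; in particular the formula $A \cup (S(C,A)^* \otimes D) \Rightarrow B$ differs from $A \Rightarrow B$. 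Taking $\Gamma = \bigl(\Sigma \setminus \{A \Rightarrow B\}\bigr) \cup \{A \cup (S(C,A)^* \otimes D) \Rightarrow B\}$ and using that $A \Rightarrow B$ and $C \Rightarrow D$ are two distinct members of $\Sigma$, every clause of the definition~\eqref{eqn:reduces} of $\yields[]{}{}$ (with $\Gamma$ as in~\eqref{eqn:reduct_Gamma}) is met, so $\yields[]{\Sigma}{\Gamma}$, contradicting the irreducibility of $\Sigma$.

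Therefore $||C \Rightarrow D||^*_A = 1$ whenever $A \Rightarrow B, C \Rightarrow D \in \Sigma$ are distinct, which is precisely the statement that $A \in \mathrm{Mod}^*(\Sigma \setminus \{A \Rightarrow B\})$ for every $A \Rightarrow B \in \Sigma$, i.e., the (presupposed) non-redundancy of $\Sigma$ is witnessed. I do not anticipate a genuine obstacle here; the only detail warranting care is the side clause $\Gamma \ne \Sigma$ built into the definition of $\yields[]{}{}$, since a priori the newly formed formula $A \cup (S(C,A)^* \otimes D) \Rightarrow B$ might already lie in $\Sigma$. But even in that case $\Gamma = \Sigma \setminus \{A \Rightarrow B\}$ is a proper subset of $\Sigma$ (the formula $A \Rightarrow B$ having disappeared, since its antecedent strictly grew), and otherwise $\Gamma$ contains a formula absent from $\Sigma$; either way $\Gamma \ne \Sigma$. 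It is also worth noting that non-redundancy of $\Sigma$ plays no role in the argument beyond making the phrase ``non-redundancy is witnessed'' well-posed: the content of the lemma rests entirely on irreducibility.
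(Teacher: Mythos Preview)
Your proof is correct and follows essentially the same approach as the paper's: both argue that a failure of $||C \Rightarrow D||^*_A = 1$ (equivalently $S(C,A)^* \otimes D \nsubseteq A$) yields a reduction step $\yields[]{\Sigma}{\Gamma}$, contradicting irreducibility. The only difference is in verifying $\Gamma \ne \Sigma$: the paper invokes non-redundancy to rule out $A \cup (S(C,A)^* \otimes D) \Rightarrow B \in \Sigma$, whereas you handle that possibility by observing $\Gamma = \Sigma \setminus \{A \Rightarrow B\} \subsetneq \Sigma$ in that case, so your remark that non-redundancy is inessential to the argument is a small but valid sharpening.
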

\begin{proof}
  Take any $A \Rightarrow B \in \Sigma$. It suffices to show that
  $A \in \mathrm{Mod}^*(\Sigma \setminus \{A \Rightarrow B\})$.
  Take any $C \Rightarrow D \in \Sigma \setminus \{A \Rightarrow B\}$,
  i.e., $A \Rightarrow B$ and $C \Rightarrow D$ are distinct formulas.
  Now, observe that $A \cup (S(C,A)^* \otimes D) \Rightarrow B \not\in \Sigma$
  because otherwise the non-redundancy of $\Sigma$ would be violated owing to
  $A \Rightarrow B \in \Sigma$.
  Since $\Sigma$ is irreducible, we must have $S(C,A)^* \otimes D \subseteq A$
  which by adjointness yields $S(C,A)^* \leq S(D,A)$,
  i.e., $A \in \mathrm{Mod}^*(\{C \Rightarrow D\})$.
  As a consequence,
  $A \in \mathrm{Mod}^*(\Sigma \setminus \{A \Rightarrow B\})$,
  which finishes the proof.
\end{proof}

\begin{proof}[Proof of Theorem~\ref{th:witnessed}]
  The proof results by putting together the observations in
  the previous lemmas. Because of the finiteness of $\mathbf{L}$ and $Y$,
  the existence of irreducible $\Sigma$ such that $\yields{\Gamma}{\Sigma}$
  is ensured. Then, by induction, Lemma~\ref{le:equiv}
  yields $\Sigma \equiv \Gamma$, Lemma~\ref{le:nonred} yields
  that $\Sigma$ is non-redundant, Lemma~\ref{le:satcons} yields
  that $\Sigma$ has saturated consequents. Finally, Lemma~\ref{le:witness}
  shows that the non-redundancy of $\Sigma$ is witnessed.
  This proves Theorem~\ref{th:witnessed} for $\Gamma$ being non-redundant.
  If $\Gamma$ is in addition minimal, it is easily seen that $\Sigma$
  is minimal as well---this follows directly by~the form
  of~\eqref{eqn:reduct_Gamma}.
\end{proof}

We conclude this section by illustrative examples and some
experimental observations on sizes of non-redundant bases
determined from data using different hedges. In the examples we
use the usual notation for writing $\mathbf{L}$-sets on finite
universe sets, namely, $\{{}^{a_1\!}/y_1,\ldots,{}^{a_n\!}/y_n\}$
denotes an $\mathbf{L}$-set $A$ in $Y = \{y_1,\ldots,y_n\}$ such
that $A(y_i) = a_i$ for all $i=1,\ldots,n$.
Optionally, we omit ${}^{a_i\!}/y_i$ whenever $a_i = 0$ and write
just $y_i$ instead of ${}^{a_i\!}/y_i$ whenever $a_i = 1$.

\begin{example}\label{ex:osgai_now_works}
  Suppose that $\mathbf{L}$ is a residuated lattice with $L = \{0, 0.5, 1\}$
  and $\otimes = \wedge$ (i.e., $\mathbf{L}$ is a the three-element
  G\"odel chain~\cite{Bel:FRS,Got:Mfl,KMP:TN}) and let ${}^*$ be
  the identity. Furthermore, we take theory $\Gamma$
  from~\cite[Example~4]{Vy:Osgaiwnr}:
  \begin{align*}
    \Gamma &= \{
    \{{}^{0.5\!}/p\} \Rightarrow \{{}^{0.5\!}/p,{}^{0.5\!}/q,r\},
    \{p\} \Rightarrow \{p,q,r\}\}.
  \end{align*}
  For this theory on $Y = \{p,q,r\}$ and the considered $\mathbf{L}$
  and ${}^*$, \cite[Example~4]{Vy:Osgaiwnr} shows that $\Sigma$ given by
  \begin{align*}
    \Sigma &=
    \bigl\{[A]^*_{\Gamma \setminus \{A \Rightarrow [A]^*_\Gamma\}}
    \Rightarrow
    [A]^*_\Gamma;\, A \Rightarrow [A]^*_\Gamma \in \Gamma\bigr\}.
  \end{align*}
  which equals to
  \begin{align*}
    \Sigma &= \{
    \{{}^{0.5\!}/p,{}^{0.5\!}/q,{}^{0.5\!}/r\} \Rightarrow
    \{{}^{0.5\!}/p,{}^{0.5\!}/q,r\},
    \{p,{}^{0.5\!}/q,r\} \Rightarrow \{p,q,r\}\}
  \end{align*}
  is not equivalent to $\Gamma$. This is a particular example of a theory
  which cannot be converted into an equivalent one with witnessed
  non-redundancy using the method described in~\cite{Vy:Osgaiwnr}.
  Using the method shown in the present paper, for $\Gamma$ we may
  find an irreducible theory $\Sigma'$ such that $\yields{\Gamma}{\Sigma'}$.
  Actually, this can be done in a single step:
  For $\{{}^{0.5\!}/p\} \Rightarrow \{{}^{0.5\!}/p,{}^{0.5\!}/q,r\} \in \Gamma$
  and $\{p\} \Rightarrow \{p,q,r\} \in \Gamma$, we can easily see that
  \begin{align*}
    S(\{p\},\{{}^{0.5\!}/p\})^* \otimes \{p,q,r\} =
    0.5 \otimes \{p,q,r\} = \{{}^{0.5\!}/p,{}^{0.5\!}/q,{}^{0.5\!}/r\},
  \end{align*}
  i.e., $\Gamma$ can be reduced to
  \begin{align*}
    \Gamma' &= \{
    \{{}^{0.5\!}/p,{}^{0.5\!}/q,{}^{0.5\!}/r\} \Rightarrow
    \{{}^{0.5\!}/p,{}^{0.5\!}/q,r\},
    \{p\} \Rightarrow \{p,q,r\}\}
  \end{align*}
  and $\Gamma'$ is already irreducible. Thus, Theorem~\ref{th:witnessed}
  yields that $\Gamma'$ is equivalent to $\Gamma$, it is non-redundant,
  has saturated consequents, and its non-redundancy is witnessed. This
  example demonstrates that the outcome of the algorithm studied in the
  previous paper~\cite{Vy:Osgaiwnr} is unnecessary weak in case of hedges
  other than the globalization.
\end{example}

\begin{example}\label{ex:dense}
  \def\AB{A \Rightarrow B \text{ is }}%
  \def\CD{C \Rightarrow D \text{ is }}%
  \def\RPLC#1{\text{replace } A \Rightarrow B \text{ in } \Gamma_{#1} \text{ by }}%
  In this example, we show a transformation of a non-redundant theory
  with saturated consequents into an equivalent theory with witnessed
  non-redundancy which takes more elementary steps. We assume that
  $\mathbf{L}$ is an equidistant five-element \L ukasiewicz chain,
  i.e., $L = \{0,0.25, 0.5, 0.75, 1\}$ and the adjoint operations
  $\otimes$ and $\rightarrow$ are defined as $a \otimes b = \max (0, a+b-1)$
  and $a \rightarrow b = \min (1, 1-a+b)$, respectively. Let
  \begin{align*}
    \Gamma_0 = \{&
    \{{}^{0.25\!}/r,{}^{0.25\!}/s\} \Rightarrow
    \{{}^{0.75\!}/p,{}^{0.75\!}/q,{}^{0.75\!}/r,{}^{0.25\!}/s\}, \\
    &\{{}^{0.5\!}/p,{}^{0.25\!}/q,{}^{0.5\!}/s\} \Rightarrow
    \{{}^{0.75\!}/p,{}^{0.75\!}/q,r,{}^{0.5\!}/s\}, \\
    &\{{}^{0.25\!}/p,s\} \Rightarrow
    \{{}^{0.75\!}/p,q,r,s\}\}.
  \end{align*}
  It can be easily checked that $\Gamma_0$ is non-redundant and
  has saturated consequents. Also, it is evident that the non-redundancy of
  $\Gamma_0$ is not witnessed because it is not irreducible. Indeed, there is
  $\Gamma_1$ such that $\yields[]{\Gamma_0}{\Gamma_1}$. Namely, we can take
  the formulas $A \Rightarrow B$ and $C \Rightarrow D$ which appear
  in~\eqref{eqn:reduct_Gamma} as follows:
  \begin{flalign*}
    &\AB\{{}^{0.25\!}/r,{}^{0.25\!}/s\} \Rightarrow
    \{{}^{0.75\!}/p,{}^{0.75\!}/q,{}^{0.75\!}/r,{}^{0.25\!}/s\}, && \\
    &\CD\{{}^{0.5\!}/p,{}^{0.25\!}/q,{}^{0.5\!}/s\} \Rightarrow
    \{{}^{0.75\!}/p,{}^{0.75\!}/q,r,{}^{0.5\!}/s\},\\
    &S(C,A)^*\otimes D =
    0.5\otimes\{{}^{0.75\!}/p,{}^{0.75\!}/q,r,{}^{0.5\!}/s\} =
    \{{}^{0.25\!}/p,{}^{0.25\!}/q,{}^{0.5\!}/r\},\\
    &\RPLC{0} \{{}^{0.25\!}/p,{}^{0.25\!}/q,{}^{0.5\!}/r,{}^{0.25\!}/s\} \Rightarrow
    \{{}^{0.75\!}/p,{}^{0.75\!}/q,{}^{0.75\!}/r,{}^{0.25\!}/s\}.
  \end{flalign*}
  Thus, $\Gamma_1 = (\Gamma_0 \setminus \{A \Rightarrow B\}) \cup 
  \{\{{}^{0.25\!}/p,{}^{0.25\!}/q,{}^{0.5\!}/r,{}^{0.25\!}/s\} \Rightarrow B\}$.
  For the new formula, we can repeat the procedure with the same $C \Rightarrow D$.
  That is,
  \begin{flalign*}
    &\AB\{{}^{0.25\!}/p,{}^{0.25\!}/q,{}^{0.5\!}/r,{}^{0.25\!}/s\} \Rightarrow
    \{{}^{0.75\!}/p,{}^{0.75\!}/q,{}^{0.75\!}/r,{}^{0.25\!}/s\}, && \\
    &\CD\{{}^{0.5\!}/p,{}^{0.25\!}/q,{}^{0.5\!}/s\} \Rightarrow
    \{{}^{0.75\!}/p,{}^{0.75\!}/q,r,{}^{0.5\!}/s\},\\
    &S(C,A)^*\otimes D =
    0.75\otimes\{{}^{0.75\!}/p,{}^{0.75\!}/q,r,{}^{0.5\!}/s\} =
    \{{}^{0.5\!}/p,{}^{0.5\!}/q,{}^{0.75\!}/r,{}^{0.25\!}/s\},\\
    &\RPLC{1} \{{}^{0.5\!}/p,{}^{0.5\!}/q,{}^{0.75\!}/r,{}^{0.25\!}/s\} \Rightarrow
    \{{}^{0.75\!}/p,{}^{0.75\!}/q,{}^{0.75\!}/r,{}^{0.25\!}/s\},
  \end{flalign*}
  i.e., $\Gamma_2 = (\Gamma_1 \setminus \{A \Rightarrow B\}) \cup
  \{\{{}^{0.5\!}/p,{}^{0.5\!}/q,{}^{0.75\!}/r,{}^{0.25\!}/s\} \Rightarrow B$.
  In the next step, we update the antecedent of the formula which so far played
  the role of $C \Rightarrow D$:
  \begin{flalign*}
    &\AB\{{}^{0.5\!}/p,{}^{0.25\!}/q,{}^{0.5\!}/s\} \Rightarrow
    \{{}^{0.75\!}/p,{}^{0.75\!}/q,r,{}^{0.5\!}/s\}, && \\
    &\CD\{{}^{0.25\!}/p,s\} \Rightarrow \{{}^{0.75\!}/p,q,r,s\},\\
    &S(C,A)^*\otimes D = 0.5\otimes\{{}^{0.75\!}/p,q,r,s\} =
    \{{}^{0.25\!}/p,{}^{0.5\!}/q,{}^{0.5\!}/r,{}^{0.5\!}/s\},\\
    &\RPLC{2} \{{}^{0.5\!}/p,{}^{0.5\!}/q,{}^{0.5\!}/r,{}^{0.5\!}/s\} \Rightarrow
    \{{}^{0.75\!}/p,{}^{0.75\!}/q,r,{}^{0.5\!}/s\}.
  \end{flalign*}
  Thus, $\Gamma_3 = (\Gamma_2 \setminus \{A \Rightarrow B\}) \cup
  \{\{{}^{0.5\!}/p,{}^{0.5\!}/q,{}^{0.5\!}/r,{}^{0.5\!}/s\} \Rightarrow B\}$.
  In the next step, we flip the roles of $A \Rightarrow B$ and $C \Rightarrow D$
  from the previous step and make an update of the antecedent of
  $\{{}^{0.25\!}/p,s\} \Rightarrow \{{}^{0.75\!}/p,q,r,s\}$:
  \begin{flalign*}
    &\AB\{{}^{0.25\!}/p,s\} \Rightarrow \{{}^{0.75\!}/p,q,r,s\}, && \\
    &\CD\{{}^{0.5\!}/p,{}^{0.5\!}/q,{}^{0.5\!}/r,{}^{0.5\!}/s\} \Rightarrow
    \{{}^{0.75\!}/p,{}^{0.75\!}/q,r,{}^{0.5\!}/s\},\\
    &S(C,A)^*\otimes D =
    0.5\otimes\{{}^{0.75\!}/p,{}^{0.75\!}/q,r,{}^{0.5\!}/s\} =
    \{{}^{0.25\!}/p,{}^{0.25\!}/q,{}^{0.5\!}/r\},\\
    &\RPLC{3} \{{}^{0.25\!}/p,{}^{0.25\!}/q,{}^{0.5\!}/r,s\} \Rightarrow
    \{{}^{0.75\!}/p,q,r,s\},
  \end{flalign*}
  i.e., $\Gamma_4 = (\Gamma_3 \setminus \{A \Rightarrow B\}) \cup
  \{\{{}^{0.25\!}/p,{}^{0.25\!}/q,{}^{0.5\!}/r,s\} \Rightarrow B\}$.
  In the last two steps, we use the same formulas for update, i.e.,
  \begin{flalign*}
    &\AB\{{}^{0.25\!}/p,{}^{0.25\!}/q,{}^{0.5\!}/r,s\} \Rightarrow
    \{{}^{0.75\!}/p,q,r,s\}, && \\
    &\CD\{{}^{0.5\!}/p,{}^{0.5\!}/q,{}^{0.5\!}/r,{}^{0.5\!}/s\} \Rightarrow
    \{{}^{0.75\!}/p,{}^{0.75\!}/q,r,{}^{0.5\!}/s\},\\
    &S(C,A)^*\otimes D =
    0.75\otimes\{{}^{0.75\!}/p,{}^{0.75\!}/q,r,{}^{0.5\!}/s\} =
    \{{}^{0.5\!}/p,{}^{0.5\!}/q,{}^{0.75\!}/r,{}^{0.25\!}/s\},\\
    &\RPLC{4} \{{}^{0.5\!}/p,{}^{0.5\!}/q,{}^{0.75\!}/r,s\} \Rightarrow
    \{{}^{0.75\!}/p,q,r,s\}.
  \end{flalign*}
  Thus, $\Gamma_5 = (\Gamma_4 \setminus \{A \Rightarrow B\}) \cup
  \{\{{}^{0.5\!}/p,{}^{0.5\!}/q,{}^{0.75\!}/r,s\} \Rightarrow B\}$.
  Finally, we can see that $C$ is now fully included in $A$, i.e.,
  \begin{flalign*}
    &\AB\{{}^{0.5\!}/p,{}^{0.5\!}/q,{}^{0.75\!}/r,s\} \Rightarrow
    \{{}^{0.75\!}/p,q,r,s\}, && \\
    &\CD\{{}^{0.5\!}/p,{}^{0.5\!}/q,{}^{0.5\!}/r,{}^{0.5\!}/s\} \Rightarrow
    \{{}^{0.75\!}/p,{}^{0.75\!}/q,r,{}^{0.5\!}/s\},\\
    &S(C,A)^*\otimes D =
    1.0\otimes\{{}^{0.75\!}/p,{}^{0.75\!}/q,r,{}^{0.5\!}/s\} =
    \{{}^{0.75\!}/p,{}^{0.75\!}/q,r,{}^{0.5\!}/s\},\\
    &\RPLC{5} \{{}^{0.75\!}/p,{}^{0.75\!}/q,r,s\} \Rightarrow
    \{{}^{0.75\!}/p,q,r,s\}.
  \end{flalign*}
  Now, the last computed theory $\Gamma_5$:
  \begin{align*}
    \Gamma_5 = \{
    &\{\{{}^{0.75\!}/p,{}^{0.75\!}/q,r,s\} \Rightarrow
    \{{}^{0.75\!}/p,q,r,s\}, \\
    &\{{}^{0.5\!}/p,{}^{0.5\!}/q,{}^{0.5\!}/r,{}^{0.5\!}/s\} \Rightarrow
    \{{}^{0.75\!}/p,{}^{0.75\!}/q,r,{}^{0.5\!}/s\}, \\
    &\{{}^{0.5\!}/p,{}^{0.5\!}/q,{}^{0.75\!}/r,{}^{0.25\!}/s\} \Rightarrow
    \{{}^{0.75\!}/p,{}^{0.75\!}/q,{}^{0.75\!}/r,{}^{0.25\!}/s\}\}
    \}
  \end{align*}
  is irreducible, i.e., its non-redundancy is witnessed.
\end{example}

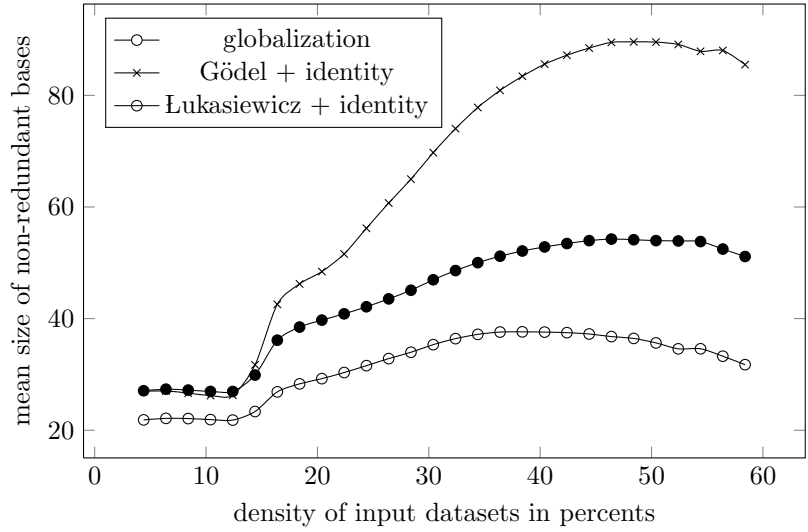
\begin{figure}[p]
  \centering
  \begin{tikzpicture}
    \begin{axis}[
      legend pos={north west},
      width=4.4in,
      height=3in,
      xlabel={density of input datasets in percents},
      ylabel={mean size of non-redundant bases}]
      \addplot[smooth,mark=*] plot coordinates {
        (4.4, 27.08333)
        (6.4, 27.36377)
        (8.4, 27.19948)
        (10.4, 26.95884)
        (12.4, 26.95452)
        (14.4, 29.89208)
        (16.4, 36.13858)
        (18.4, 38.48821)
        (20.4, 39.72961)
        (22.4, 40.85608)
        (24.4, 42.12439)
        (26.4, 43.53862)
        (28.4, 45.08374)
        (30.4, 46.96050)
        (32.4, 48.60913)
        (34.4, 50.02156)
        (36.4, 51.18034)
        (38.4, 52.11296)
        (40.4, 52.84335)
        (42.4, 53.44925)
        (44.4, 53.96968)
        (46.4, 54.25108)
        (48.4, 54.13446)
        (50.4, 53.97172)
        (52.4, 53.91469)
        (54.4, 53.80597)
        (56.4, 52.46154)
        (58.4, 51.12500)};
    \addlegendentry{globalization}
      \addplot[smooth,mark=x] plot coordinates {
        (4.4, 26.97222)
        (6.4, 27.04280)
        (8.4, 26.65366)
        (10.4, 26.21883)
        (12.4, 26.27337)
        (14.4, 31.71878)
        (16.4, 42.56228)
        (18.4, 46.21883)
        (20.4, 48.42900)
        (22.4, 51.58438)
        (24.4, 56.18126)
        (26.4, 60.71172)
        (28.4, 64.98307)
        (30.4, 69.73750)
        (32.4, 74.03585)
        (34.4, 77.82301)
        (36.4, 80.88539)
        (38.4, 83.41415)
        (40.4, 85.58308)
        (42.4, 87.18458)
        (44.4, 88.46551)
        (46.4, 89.49138)
        (48.4, 89.57669)
        (50.4, 89.53737)
        (52.4, 89.11848)
        (54.4, 87.88060)
        (56.4, 88.03846)
        (58.4, 85.50000)};
      \addlegendentry{G\"odel + identity}
      \addplot[smooth,mark=o] plot coordinates {
        (4.4, 21.86111)
        (6.4, 22.13124)
        (8.4, 22.09540)
        (10.4, 21.90983)
        (12.4, 21.82262)
        (14.4, 23.35893)
        (16.4, 26.87926)
        (18.4, 28.31129)
        (20.4, 29.24769)
        (22.4, 30.34321)
        (24.4, 31.57411)
        (26.4, 32.83738)
        (28.4, 33.97340)
        (30.4, 35.33632)
        (32.4, 36.41410)
        (34.4, 37.17894)
        (36.4, 37.59591)
        (38.4, 37.63145)
        (40.4, 37.58626)
        (42.4, 37.48308)
        (44.4, 37.24825)
        (46.4, 36.78017)
        (48.4, 36.45319)
        (50.4, 35.64848)
        (52.4, 34.58768)
        (54.4, 34.59701)
        (56.4, 33.26923)
        (58.4, 31.75000)};
      \addlegendentry{\L ukasiewicz + identity}
    \end{axis}
  \end{tikzpicture}
  \caption{Mean sizes of non-redundant bases of formal $\mathbf{L}$-contexts with
    $5$ objects and $10$ attributes computed for structures of degrees with
    $|L| = 11$ using globalization as the hedge, and using G\"odel and
    \L ukasiewicz operations with identity as the hedge.}
  \label{fig:sizes}
\end{figure}

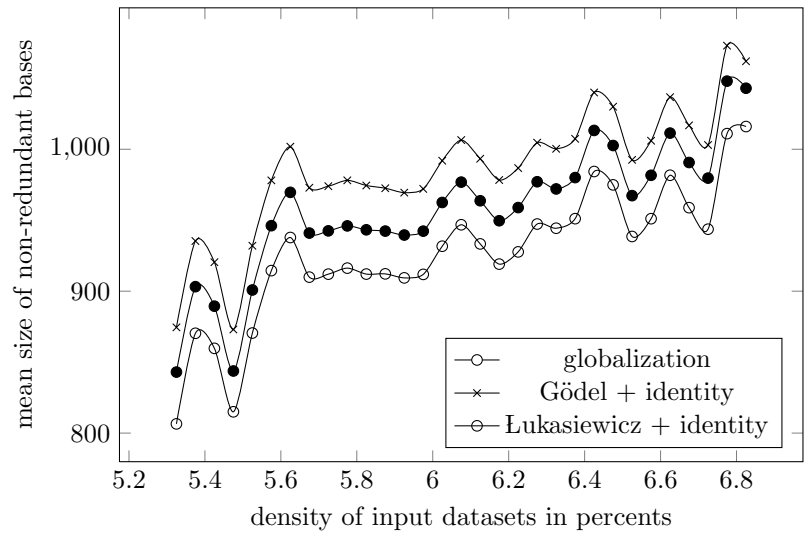
\begin{figure}[p]
  \centering
  \begin{tikzpicture}
    \begin{axis}[
      legend pos={south east},
      width=4.2in,
      height=3in,
      xlabel={density of input datasets in percents},
      ylabel={mean size of non-redundant bases}]
      \addplot[smooth,mark=*] plot coordinates {
        (5.325, 843.0000)
        (5.375, 903.1667)
        (5.425, 889.4286)
        (5.475, 843.7500)
        (5.525, 900.8333)
        (5.575, 946.1429)
        (5.625, 969.6364)
        (5.675, 940.9444)
        (5.725, 942.5000)
        (5.775, 946.0000)
        (5.825, 943.2381)
        (5.875, 942.3333)
        (5.925, 939.5581)
        (5.975, 942.3000)
        (6.025, 962.5106)
        (6.075, 976.8444)
        (6.125, 963.7059)
        (6.175, 949.6182)
        (6.225, 958.9048)
        (6.275, 977.0952)
        (6.325, 972.0357)
        (6.375, 980.0455)
        (6.425,1013.2632)
        (6.475,1002.6364)
        (6.525, 967.3077)
        (6.575, 981.6667)
        (6.625,1011.3846)
        (6.675, 990.6667)
        (6.725, 979.6667)
        (6.775,1048.0000)
        (6.825,1043.0000)};
      \addlegendentry{globalization}
      \addplot[smooth,mark=x] plot coordinates {
        (5.325, 874.5000)
        (5.375, 935.3333)
        (5.425, 920.4286)
        (5.475, 873.0000)
        (5.525, 932.0000)
        (5.575, 978.0714)
        (5.625,1001.8182)
        (5.675, 972.8889)
        (5.725, 974.0000)
        (5.775, 978.0952)
        (5.825, 974.4762)
        (5.875, 972.5333)
        (5.925, 969.4419)
        (5.975, 972.0750)
        (6.025, 991.9362)
        (6.075,1006.4889)
        (6.125, 993.3137)
        (6.175, 978.3273)
        (6.225, 986.7381)
        (6.275,1004.6429)
        (6.325,1000.3214)
        (6.375,1007.3636)
        (6.425,1040.0000)
        (6.475,1030.0000)
        (6.525, 992.5385)
        (6.575,1006.0000)
        (6.625,1036.6923)
        (6.675,1016.8333)
        (6.725,1003.0000)
        (6.775,1073.0000)
        (6.825,1062.0000)};
      \addlegendentry{G\"odel + identity}
      \addplot[smooth,mark=o] plot coordinates {
        (5.325, 806.5000)
        (5.375, 870.3333)
        (5.425, 859.7143)
        (5.475, 815.0000)
        (5.525, 870.5000)
        (5.575, 914.5000)
        (5.625, 937.8182)
        (5.675, 909.8889)
        (5.725, 912.0000)
        (5.775, 916.2381)
        (5.825, 912.0476)
        (5.875, 912.2000)
        (5.925, 909.3721)
        (5.975, 911.8250)
        (6.025, 931.6809)
        (6.075, 946.8000)
        (6.125, 933.3137)
        (6.175, 919.0727)
        (6.225, 927.6905)
        (6.275, 947.3095)
        (6.325, 944.3571)
        (6.375, 951.0909)
        (6.425, 984.3158)
        (6.475, 974.9091)
        (6.525, 938.5385)
        (6.575, 951.0667)
        (6.625, 981.6923)
        (6.675, 958.8333)
        (6.725, 943.6667)
        (6.775,1011.0000)
        (6.825,1016.0000)};
      \addlegendentry{\L ukasiewicz + identity}
    \end{axis}
  \end{tikzpicture}
  \caption{Mean sizes of non-redundant bases of sparse formal $\mathbf{L}$-contexts
    with $20$ objects and $50$ attributes for $|L| = 5$.}
  \label{fig:sparse}
\end{figure}

\begin{example}\label{ex:experiments}
  In the last example, we present results of some experimental observations
  on sizes of non-redundant bases with witnessed non-redundancy computed for
  randomly generated formal $\mathbf{L}$-contexts. Fig.~\ref{fig:sizes} shows
  the mean sizes of bases computed from $\mathbf{L}$-contexts with $5$ object and
  $10$ attributes. The set of truth degrees used in the experiment is
  an $11$-element equidistant subchain of $[0,1]$.
  The $x$-axis in Fig.~\ref{fig:sizes} indicates the density of input data
  which if for $\mathbf{I} = \langle X,Y,I\rangle$ expressed in percents by
  \begin{align*}
    \cfrac{%
      \textstyle\sum_{x \in X}\sum_{y \in Y}I(x,y)}{%
      |X| \cdot |Y|} \cdot 100.
  \end{align*}
  The $y$-axis indicates the mean number of formulas in a non-redundant base
  with witnessed non-redundancy. For each randomly generated $\mathbf{L}$-contexts,
  we have computed a base using the globalization
  (which completely supersedes the role of $\otimes$ and
  $\rightarrow$) and identity as the hedge. In the latter case, we have used
  the standard \L ukasiewicz and G\"odel operations on $[0,1]$. The total number
  of generated formal $\mathbf{L}$-contexts for the experiment is $58,300$.
  As it turns out, the \L ukasiewicz operations with identity as the hedge
  yield smaller bases which is most evident if the density of the
  input data is about 50\,\%.
  Interestingly, in most cases, the G\"odel operations with identity yield
  greater bases than in the case of globalization with the exception of
  sparse data sets but this may be just a deviation from a typical behavior
  which is due to the small size of the data. In Fig.~\ref{fig:sparse}, we
  have included the results of a similar experiment on randomly generated
  data with $20$ objects, $50$ attributes, and using $5$ truth degrees.
  Here, even if the data is sparse, the sizes of bases for 
  the G\"odel operations with identity are bigger than that for globalization.
  Even in this case, the \L ukasiewicz operations with identity
  seem to produce the smallest bases. Let us note that in order to produce
  Fig.~\ref{fig:sparse}, we have used $300$ randomly
  generated $\mathbf{L}$-contexts.
\end{example}

\section{Conclusion}\label{sec:concl}
Our paper deals with transformations of sets of if-then rules describing
dependencies between graded attributes which generalize the ordinary attribute
implications in a setting where presence of attributes is expressed by degrees. The
transformations we deal with allow us to transform a set of such dependencies into
an equivalent one which is non-redundant and satisfies a stronger condition of
witnessed non-redundancy. The witnessed non-redundancy may be seen as a desirable
property because the non-redundancy of every rule in the set can be directly
observed from its antecedent. Compared to our previous results, the proposed
algorithm works for arbitrary idempotent truth-stressing linguistic hedge which
may serve as a parameter of interpretation of the rules. As one of the consequences
of the theoretical result, we obtained that every finite object-attribute data
table with graded attributes (formal $\mathbf{L}$-context) admits at least one
system of pseudo-intents provided that the utilized structure of degrees is finite
and, in addition, it admits a system of pseudo-intents which determines
a minimal base of the data table. By this, we have closed one of the open problems
listed in~\cite{Kw:Open2006}.
We have also presented some initial experimental
evidence that bases of if-then rules computed using different hedges than the
globalization can be smaller and thus more interesting for users. Clearly, in the
graded setting, the topics related to non-redundancy and minimality of bases
are considerably more involved than in the classic setting and further investigation
focused on theory, algorithms, and experiments is needed.

\subsubsection*{Acknowledgment}
Supported by grant no. \verb|P202/14-11585S| of the Czech Science Foundation.


\footnotesize
\bibliographystyle{amsplain}
\bibliography{csgaiwnr}

\end{document}